\NewDocumentCommand{\entropy}{om}{\mathbb{H}\left[#2
    \IfValueT{#1}{\,\middle|\,#1}\right]}
\NewDocumentCommand{\bentropy}{lm}
  {\widetilde{\mathbb{H}}#1\left[#2\right]}
\NewDocumentCommand{\mutualInfo}{omm}{\mathbb{I}\left[#2;#3
    \IfValueT{#1}{\,\middle|\,#1}\right]}
\newtheorem{theorem}{Theorem}
\newtheorem{lemma}{Lemma}
\newtheorem{remark}{Remark}
\newlist{enumerate*}{enumerate*}{1}
\setlist[enumerate*]{label=(\arabic*)}
\newcommand{\ben}{\begin{eqnarray}}
\newcommand{\een}{\end{eqnarray}}
\newcommand{\transpose}{^{\intercal}}
\title{Online Supervised Subspace Tracking}
\author{Yao Xie$^*$, \quad \and Ruiyang Song, \quad \and Hanjun Dai, \quad \and Qingbin Li,  \thanks{Yao Xie$^*$ (Email: {yao.xie@isye.gatech.edu})
   is with the H. Milton Stewart School of
    Industrial and Systems Engineering, Georgia Institute of
    Technology, Atlanta, GA.
      }
    \quad \and Le Song
}
\begin{document}
\maketitle

\begin{abstract}
We present a framework for supervised subspace tracking, when there are two time series $x_t$ and $y_t$, one being the high-dimensional predictors and the other being the response variables and the subspace tracking needs to take into consideration of both sequences. It extends the classic online subspace tracking work which can be viewed as tracking of $x_t$ only. Our online sufficient dimensionality reduction (OSDR) is a meta-algorithm that can be applied to various cases including linear regression, logistic regression, multiple linear regression, multinomial logistic regression, support vector machine, the random dot product model and  the multi-scale union-of-subspace model. OSDR reduces data-dimensionality on-the-fly with low-computational complexity and it can also handle missing data and dynamic data. OSDR uses an alternating minimization scheme and updates the subspace via gradient descent on the Grassmannian manifold. The subspace update can be performed efficiently utilizing the fact that the Grassmannian gradient with respect to the subspace in many settings is rank-one (or low-rank in certain cases). The optimization problem for OSDR is non-convex and hard to analyze in general; we provide convergence analysis of OSDR in a simple linear regression setting. The good performance of OSDR compared with the conventional unsupervised subspace tracking are demonstrated via numerical examples on simulated and real data. 
\end{abstract}

\begin{IEEEkeywords}
Subspace tracking, online learning, dimensionality reduction, missing data.
\end{IEEEkeywords}

\section{Introduction}
\label{introduction}


Subspace tracking plays an important role in various signal and data processing problems, including blind source separation \cite{blindSource2001}, dictionary learning \cite{DictionarySapiro09,wrightDictionary2012}, online principal component analysis (PCA) \cite{GROUSE1,robustPCA13},  imputing missing data \cite{GROUSE1}, denoising \cite{WangTu13}, and dimensionality reduction \cite{subspaceDim2014}.
%
To motivate online supervised subspace tracking, we consider online dimensionality reduction. Applications such as the Kinect system generate data that are high-resolution 3D frames of dimension 1280 by 960 at a rate of 12 frames per second. At such a high rate, it is desirable to perform certain dimensionality reduction on-the-fly rather than storing the complete data. In the unsupervised setting, dimensionality reduction is achieved by PCA, which projects the data using dominant eigenspace of the data covariance matrix. However, in many signal and data processing problems, 
side information is available in the form of labels or tasks. For instance, the data generated by the Kinect system contains the gesture information (e.g. sitting, standing, etc.) \cite{BiswasBasu2011,KinectHand2012}.
A \emph{supervised dimensionality reduction} may take advantage of the side information in the choice of the
subspaces for dimensionality reduction. %
The supervised dimensionality reduction is a bit more involved as it has two objectives: making a choice
of the subspace that represents the predictor vector and choosing parameters for the
 model that relates the predictor and response variables.

Existing online subspace tracking research has largely focused on
unsupervised learning, including the GROUSE algorithm (based on online
gradient descent on the Grassmannian manifolds)
\cite{GROUSE1,GROUSE2,GROUSEproof}, the PETRELS algorithm \cite{PETRELS} and
the MOUSSE algorithm \cite{MOUSSE2013}. Local convergence of GROUSE
has been shown in \cite{GROUSEproof} in terms of the expected
principle angle between the true and the estimated subspaces.
A preliminary exploration for supervised subspace tracking is reported
in \cite{logistic2012}, which performs dimensionality reduction
on the predictor vector \emph{without} considering the response variable in
the formulation.
%
%
What can go wrong if we perform subspace tracking using only the predictor $\{x_t\}$ but ignoring the response variable $\{y_t\}$ (e.g., the approach in \cite{logistic2012})? Fig. \ref{fig:illustration} and Fig. \ref{fig:bad_idea} demonstrate instances in classification, where unsupervised dimensionality reduction using a subspace may completely fail while the supervised dimension reduction does it right.


\begin{figure}
\begin{center}
\begin{tabular}{cc}
\includegraphics[width = 0.45\linewidth]{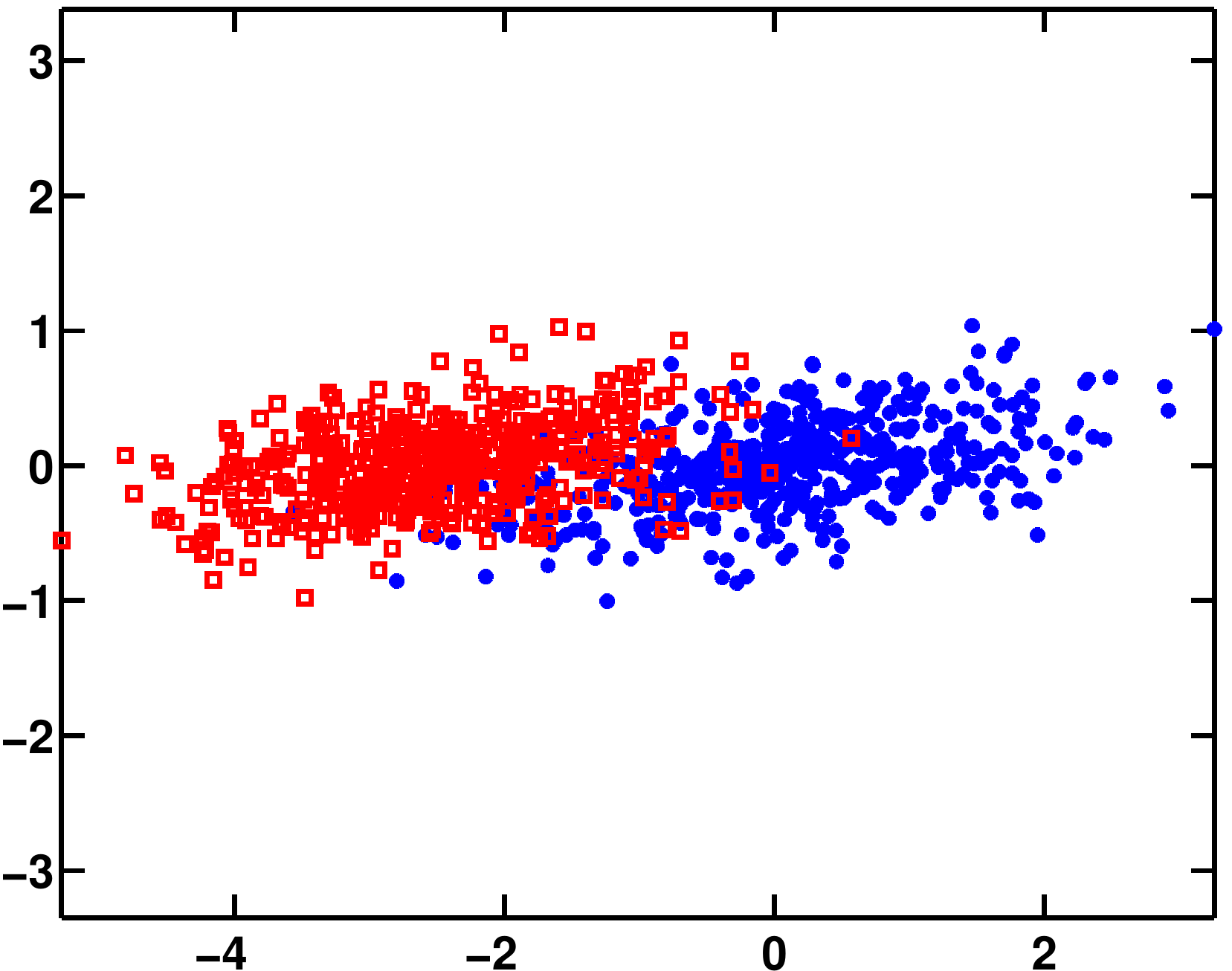} &
\includegraphics[width = 0.45\linewidth]{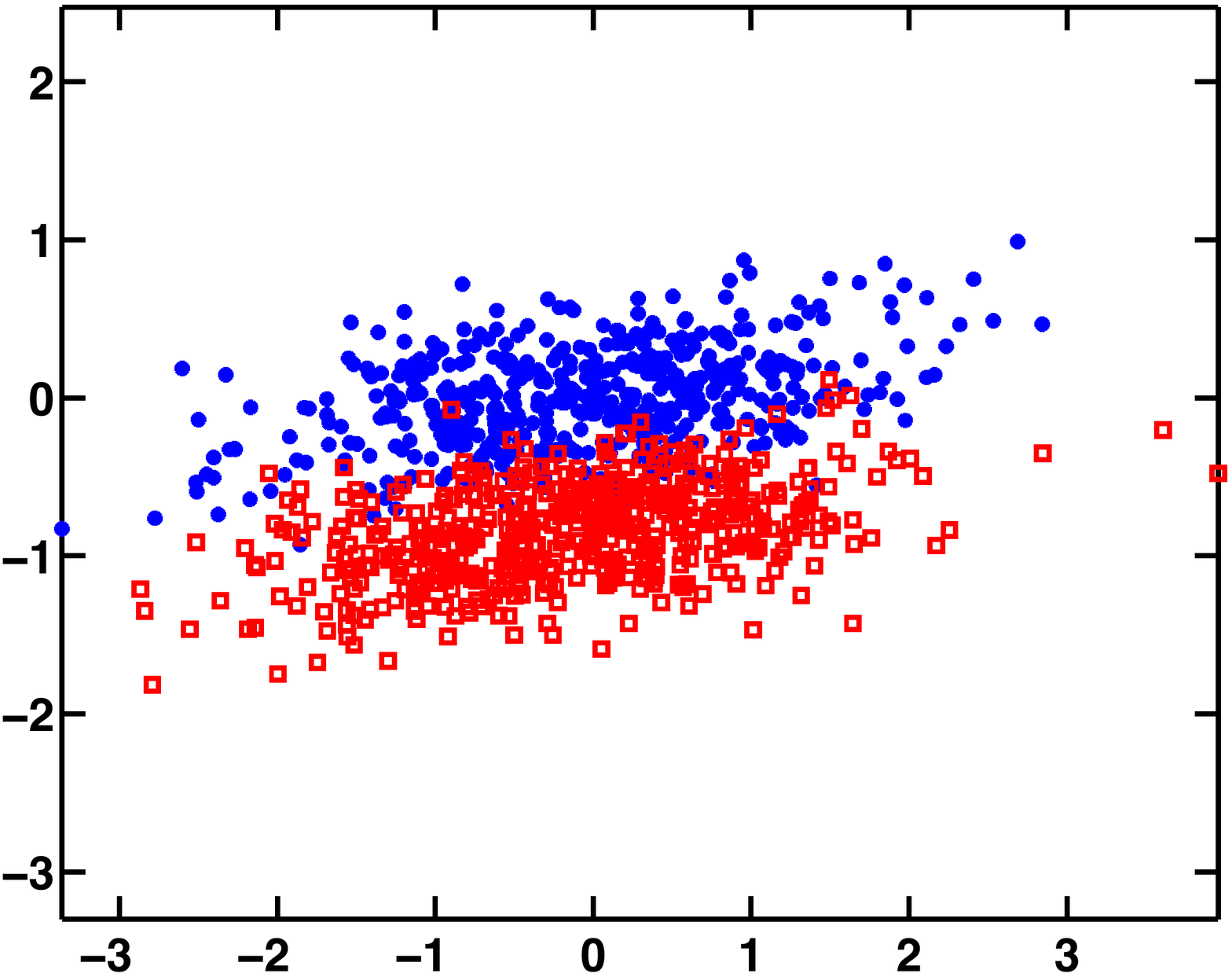}\\
(a) & (b)
\end{tabular}
\end{center}
\caption{Example to illustrate the difference of unsupervised and supervised subspace tracking. (a): a case where dimensionality reduction along largest eigenvector is optimal and both supervised and unsupervised dimensionality reduction  pick the dominant eigenvector; (b): a case where dimensionality reduction along the second largest eigenvector is optimal. Unsupervised dimensionality reduction erroneously picks the largest eigenvector since it only maximizes the variance of the predictor variable, but the supervised dimensionality reduction, by considering the response variable, correctly picks the second largest eigenvector.}
\label{fig:illustration}
\end{figure}

\begin{figure}[ht]
\begin{center}
 \includegraphics[width = 0.45\linewidth]{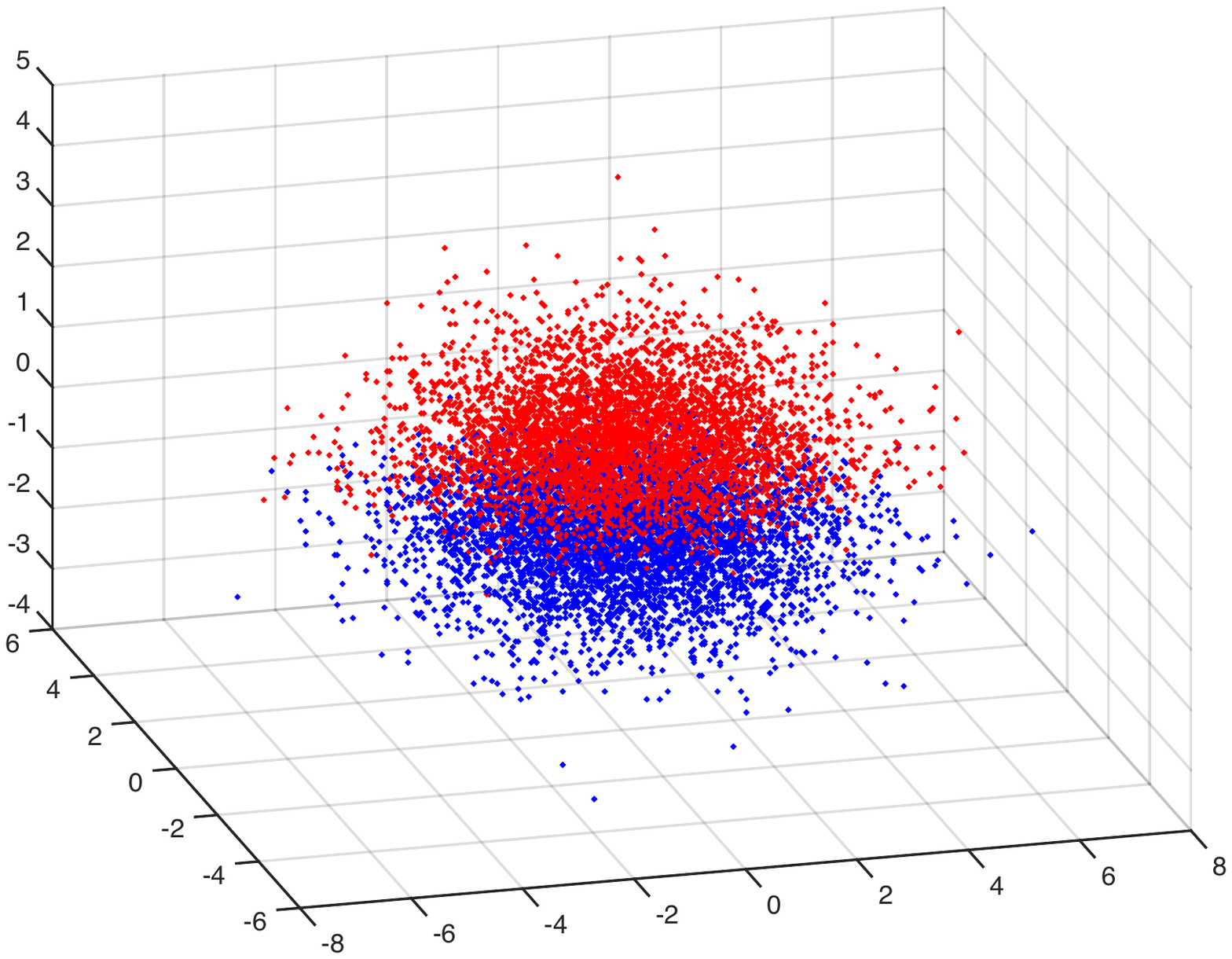} \\
 (a)
 \end{center}
 \begin{center}
\begin{tabular}{cc}
        \includegraphics[width = 0.45\linewidth]{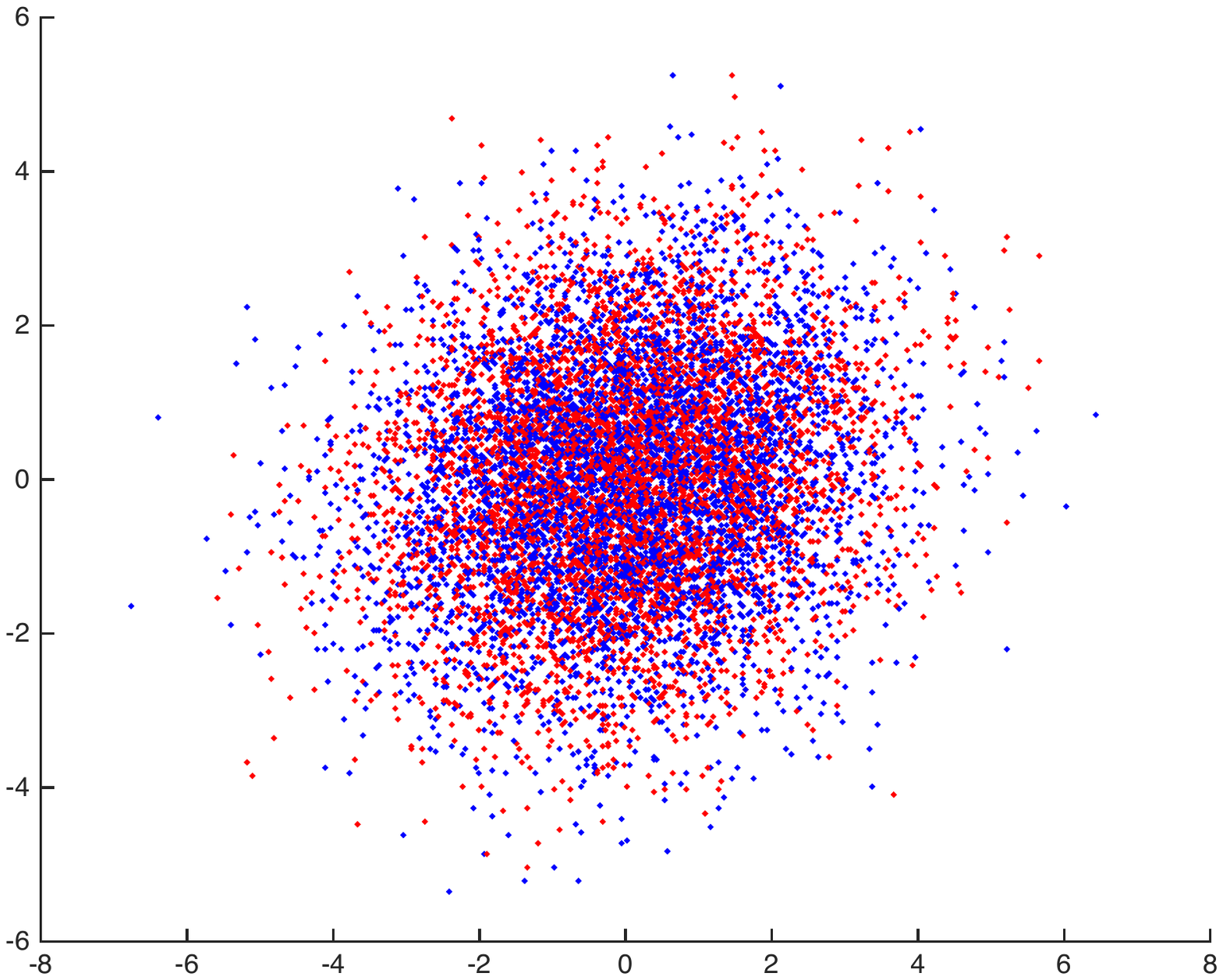} &
        \includegraphics[width = 0.45\linewidth]{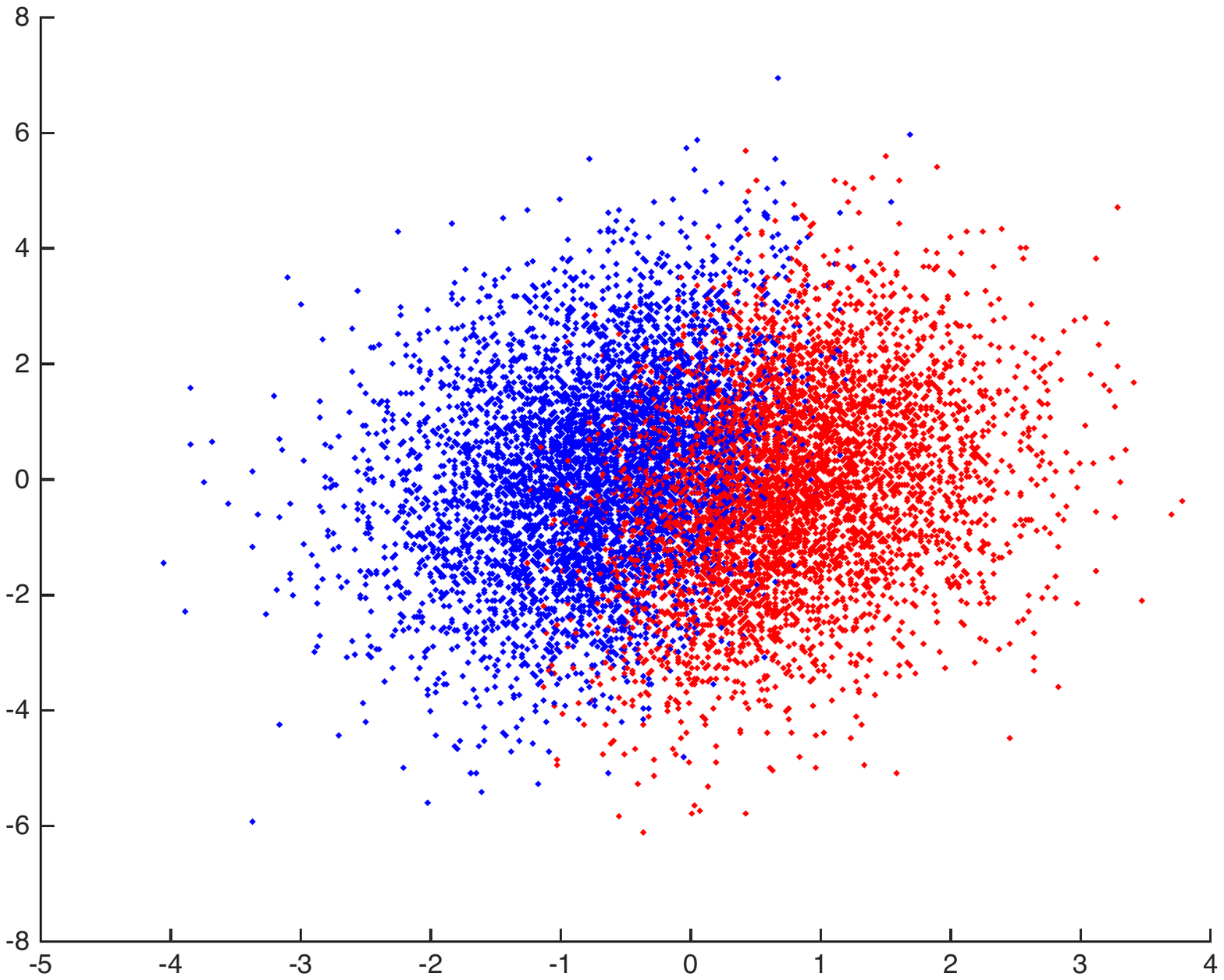} \\
        (b) & (c)
\end{tabular}
\end{center}
\caption{Simulated data correspond to the case in Fig.\ref{fig:illustration}(b): (a): three-dimensional data cloud $D = 3$ with two classes; (b): scatter plot of the data projected by unsupervised subspace tracking into a two-dimensional space $d =2$; the two classes are completely overlapped in the projected space; (c): scatter plot of the data projected by supervised  subspace tracking into a two-dimensional space $d = 2$; the two classes are well-separately. More details for this example can be found in Section \ref{sec:num_eg}.
}
\label{fig:bad_idea}
\end{figure}

In this paper, we present a general framework for supervised subspace tracking which we refer to as the online supervised dimensionality reduction (OSDR), which is a meta-algorithm that may be applied to various models. OSDR simultaneously learns the subspace and a predictive model through alternating minimization, and the formulation of OSDR takes into consideration both the high-dimensional predictor sequence $\{x_t\}$ and the response sequence $\{y_t\}$.
We explore different forms of OSDR under various settings, with the loss function being induced by linear regression, logistic regression, multiple linear regression, multinomial logistic regression, support vector machine, random dot-product model, and union-of-subspaces model, respectively. A common structure is that the Grassmannian gradient of the cost function with respect to the subspace $U$ is typically rank-one or low-rank (e.g., rank-$k$ for the $k$-classification problem, or the rank being dependent on the number of samples used for a mini-batch update). This structure enables us to develop a simple and efficient update for $U$ along the geodesic. Due to the orthogonality requirement and bilinearity, the optimization problem involved in OSDR is non-convex. We provide convergence analysis for OSDR in a simple linear regression setting. Good performance of OSDR is demonstrated on simulated and real data.

A notion in statistics related to our problem is sufficient dimensionality reduction, which
combines the idea of dimensionality reduction with the
concept of sufficiency. Given a response variable $y$, a
$D$-dimensional predictor vector $x$, a dimensionality reduction statistic $R(x)$ is sufficient if the distribution of $y$ conditioned on $R(x)$ is the same as that of $y$
conditioned on $x$. In other words, in the case of sufficiency, no information about the
regression is lost by reducing the dimension of $x$.
Classic sufficient dimensionality reduction methods include the sliced
inverse regression (SIR) \cite{SIR1991}, which uses the
inverse regression curve, $\mathbb{E}[x|y]$ to perform a weighted
principle component analysis; more recent works \cite{CookForzani2009} use likelihood-based
sufficient dimensionality reduction in estimating the central subspace. From this perspective, OSDR can be viewed as aiming at sufficiency for online subspace based dimensionality reduction.
In the offline setting, a notable work is sufficient dimension
reduction on manifold \cite{NilssonShaJordan2007}, which considers
the problem of discovering a manifold that best preserves information
relevant to a non-linear regression using a convex optimization
formulation. 

The rest of the paper is organized as follows. Section \ref{sec:formulation} sets up the formalization and the meta-algorithm form for OSDR.  Section \ref{sec:OSDR_models} presents specific OSDR algorithms under various settings. Section \ref{sec:theory} includes theoretical analysis. Section \ref{sec:num_eg} contains numerical examples based on simulated and real data. 


The notation in this paper is standard: $\mathbb{R}_+$ denotes the set of positive real numbers; $\llbracket n \rrbracket =\{1,2,\ldots,n\}$; $(x)_+ = \max\{x,0\}$ for any scalar $x$; $[x]_j$ denotes the $j$th element of a vector $x$; $\mathbb{I}\{\varepsilon\}$ is the indicator function for an event $\varepsilon$; $\|x\|_1$ and $\|x\|$ denote the $\ell_1$ norm and $\ell_2$ norm of a vector $x$, respectively; $X\transpose$ denotes transpose of a vector or matrix $X$ and $\|X\|$ denotes the Frobenius norm of a matrix $X$; $I_d$ is the identity matrix of dimension $d$-by-$d$. Define the sign function $\mbox{sgn}(x)$ is equal to 1 if $x > 0$ and is equal to 0 if $x < 0$.

\section{OSDR: the meta-algorithm}\label{sec:formulation}

Assume two time-series $(x_t, y_t)$, $t = 1, 2, \ldots$, such that $y_t$ can be predicted from the high-dimensional vector $x_t \in \mathbb{R}^D$. Here $y_t \in \mathcal{Y}$ can be either binary, real, or vector-valued.
To reduce the data dimensionality, we project $x_t$ by a subspace $U_t \in \mathbb{R}^{D\times d}$ to $x_t$, with $d\ll D$, to obtain a projected vector (or feature) $U_t\transpose x_t$. Here $d$ is the number of principle components we will use to explain the response series. Ideally, we would like to choose an $U_t$ that maximizes the prediction power of $U_t\transpose x_t$ for $y_t$, and $U_t$ can be time-varying to be data-adaptive. With such a goal, we formulate the \emph{online supervised dimensionality reduction} (OSDR), which simultaneously learns the subspace $U_t$ and a function that relates $x_t$ and $y_t$, by minimizing a cost function which measures the quality of the projection in terms of predictive power. The optimization problem of minimizing the loss function is inherently non-convex, due to the orthogonality requirement for $U_t$:  $U_t\transpose U_t = I_d$, as well as bilinear terms arising from coupling of $U_t$ and the model parameters. Hence, we develop the algorithm based on an alternating minimization and stochastic gradient descent scheme.

We consider two related formulations: the $d$-formulation, which assumes the response function only depends on the projected components and hence is a compact model that fits into the goal of dimensionality reduction. However, the $d$-formulation cannot deal with missing data. Then we also introduce the $D$-formulation, which handles missing data or can also be used for denoising. The $D$-formulation estimates the projection $\beta$ of the data using the subspace from the previous step, and then uses $\beta$ to update the subspace; however, it requires us to store high-dimensional model parameters. The loss function for the $d$- and the $D$-formulations are different, but the Grassmannian gradient with respect to $U$ is often rank-one (or low-rank). Such simple structure enables us to derive efficient algorithm to update $U$ along the geodesic, as summarized in Algorithm \ref{OSDR}. In the following derivations, we omit the time indices $t$ for notational simplicity.

\subsection{$d$-formulation}

The $d$-formulation is motived by sliced inverse regression, which assumes that the response variable $y$ depends only on the projected components. The loss function for the $d$-formulation takes the form of
\[\rho_\theta: \mathbb{R}^d \times \mathcal{Y}\rightarrow \mathbb{R},\] which measures the predictive quality of the projection for the response $y$:
$\rho_\theta(U \transpose x, y)$ with some parameter $\theta$. To compute the gradient of $\rho_\theta$ with respect to $U$ on the Grassmannian manifold, we follow the program developed in \cite{edelman1998geometry}. First compute a partial derivative of $\rho_\theta$ with respect to $U$. Let the partial derivative of the $\rho_\theta$ function with respect to the first argument be denoted as
\[g_1\triangleq  \dot{\rho}_\theta(U\transpose x, y) \in \mathbb{R}^d.\]
By the chain rule, we have the partial derivative with respective to $U$ is given by
\[
\frac{d \rho_\theta}{d U} = xg_1\transpose \in \mathbb{R}^{D\times d}.
\]
Using equation (2.70) in \cite{edelman1998geometry}, we can calculate the gradient on the Grassmannian from this partial derivative
\[
\nabla \rho_\theta = (I - UU\transpose) \frac{d \rho_\theta}{d U} = (I - UU\transpose)x g_1\transpose.
\]
In many problems that we describe in Section \ref{sec:OSDR_models}, the gradient $g_1$ is one term or a sum a few terms. Hence, the gradient has the desired low-rank structure. 
%

\subsection{$D$-formulation}

The $D$-formulation assumes that the loss function is defined in the ambient dimension space:
\[
\varrho_\vartheta: \mathbb{R}^D\times \mathcal{Y}\rightarrow \mathbb{R}.
\]
This setting is particularly useful for denoising and imputing the missing data, where we will assume the signal $x$ lies near a low-dimensional subspace, and estimate a low-dimensional component $U\beta$ and use it to predict $y$. The loss function takes the form of $\varrho_\vartheta(U\beta, y)$. Following a similar derivation as above, the gradient on the Grassmannian can be written as
\[
\nabla \varrho_\vartheta = (I - UU\transpose) g_2 \beta\transpose,
\]
where the partial derivative of $\varrho_\vartheta$ with respect to the first arguement is given by
\[
g_2  \triangleq \dot{\varrho}_\vartheta(U\beta, y) \in \mathbb{R}^{D}.
\]
Again, $g_2$ is often only one term or a sum of a few terms, and hence $\nabla \varrho_\vartheta$ has the desired low-rank structure.

To estimate $\beta$, for the denoising setting, we may use
\begin{equation}
\beta = \arg\min_z \|x  - U z\| = U\transpose x.\label{beta_est_1}
\end{equation}
When there is missing data, we are not able to observe the complete data vector $x_t$, but are only able to observe a subset of entries $\Omega_t \subset \llbracket D\rrbracket$. Using an approach similar to that in \cite{GROUSE1}, $\beta$ is estimated as
\begin{equation}
\beta = \arg\min_z \|\Delta_{\Omega_t}(x - U z)\|, \label{beta_est_2}
\end{equation}
where $\Delta_{\Omega_t}$ is an $n\times n$ diagonal matrix which has 1 in the $j$th diagonal entry if $j \in \Omega_t$ and has 0 otherwise. It can be shown $\beta = (U_{\Omega}\transpose U_{\Omega})^{-1}U_{\Omega}\transpose x_\Omega$, where $U_\Omega = \Delta_\Omega U$ and $x_\Omega = \Delta_\Omega x$.%

\begin{algorithm}
\caption{OSDR: meta-algorithm}
\begin{algorithmic}[1]
\REQUIRE a sequence of predictors and responses $(x_t, y_t)$, initial model parameter $\theta$ and subspace $U \in \mathbb{R}^{D\times d}$.
\FOR {$t = 1, 2, \ldots$}
\STATE \COMMENT{$d$-formulation}\\
$\Delta \leftarrow  (I-UU\transpose)x \dot{\rho}_\theta(U\transpose x, y)$ \COMMENT{$\rho: \mathbb{R}^d\times \mathbb{Y}\rightarrow \mathbb{R}$}
\STATE  \COMMENT{$D$-formulation}\\
$\Delta \leftarrow (I-UU\transpose) \dot{\varrho}_\vartheta(U\beta, y)\beta$, \COMMENT{$\varrho: \mathbb{R}^D\times \mathbb{Y}\rightarrow \mathbb{R}$, \\$\beta$ estimated from $x$}
\STATE fix $\theta$ or $\vartheta$, update $U$ along Grassmannian gradient $\Delta$ in geodesic
\STATE fix $U$, update $\theta$ or $\vartheta$
\ENDFOR
\end{algorithmic}
\label{OSDR}
\end{algorithm}

\section{OSDR for specific models}\label{sec:OSDR_models}

In the section, we illustrate various forms of loss functions and show that the Grassmannian gradient with respect to $U$ typically takes the form of $\gamma r w\transpose$, for some scalar $\gamma \in \mathbb{R}$, vectors $r \in \mathbb{R}^{D}$, and $w \in \mathbb{R}^d$.

\subsection{Linear regression}

For linear regression, $y \in \mathbb{R}$ and the loss function will be the $\ell_2$-norm of prediction error. In the $d$-formulation, $\theta = (a, b)$ with $a \in \mathbb{R}^d$ and $b\in \mathbb{R}$, and the loss function is
\[
\rho_\theta(U\transpose x, y) \triangleq \|y - a\transpose U\transpose x - b\|.
\]
Define
\begin{equation}
\hat{y}=a\transpose U\transpose x + b.
\label{linear_y}
\end{equation}
The Grassmannian gradient of the loss function with respect to $U$ is given by
\[
\nabla \rho_\theta(U\transpose x, y) \triangleq -(y - \hat{y})\underbrace{(I-UU\transpose)x}_{r} \underbrace{a\transpose}_{w\transpose}
\]
Using the rank-one structure of the gradient above, we perform geodesic gradient descent for $U$ using a simple form. Write
\begin{equation*}
\begin{split}&\nabla \rho_\theta(U\transpose x, y) \\
\hspace{-0.3in}=&~ \begin{bmatrix} r/\|r\| & v_2& \ldots &v_d \end{bmatrix}
\mbox{diag}(\sigma)
\begin{bmatrix} w/\|w\| & z_2& \ldots &z_d \end{bmatrix}\transpose,
\end{split}
\end{equation*}
where
\begin{equation*}
\sigma =  -(y - \hat{y})\|r\|\|w\|, \label{sigma_up}
\end{equation*}
 $v_2, \ldots, v_d$ are an orthonormal set orthogonal to $r$ and $z_2, \ldots, z_d$ are an orthonormal set orthogonal to $w$. Subsequently, using the formula in \cite{edelman1998geometry} update of $U$ is given by
\begin{equation}
U_{\rm new} = U + \frac{(\cos(\sigma \eta)-1)}{\|w\|^2} U w w\transpose + \sin(\sigma \eta) \frac{r}{\|r\|} \frac{w\transpose}{\|w\|},
\label{U_update_1}
\end{equation}
where $\eta >0$ is a step-size.
Similarly, for a fixed $U$, we may find its gradient with respect to the regression coefficient vector and update via
\begin{equation}
\begin{split}
a_{\rm new} &= a+ \mu (y-\hat{y}) U\transpose x, \\
b_{\rm new} &= b + \mu (y-\hat{y}),
\end{split}
\label{aupdate}
\end{equation}
where $\mu > 0$ is step-size for the parameter update.

In the $D$-formualtion, the model parameters are $\vartheta \triangleq (c, e)$, with $c\in \mathbb{R}^D$ and $d\in \mathbb{R}$. 
Essentially, by replacing $x$ by its estimate $U\beta$, we have the loss function
\[
\varrho_\vartheta(U\beta, y) \triangleq \|y- c\transpose U\beta - e\|_2^2.
\]
where $\beta$ is estimated using the subspace from the previous step using (\ref{beta_est_1}) or (\ref{beta_est_2}).
Let $\hat{y} = c\transpose U \beta$, we can show
\[
\nabla \varrho_\vartheta(x, y) = -(y - \hat{y}) \underbrace{(I - UU\transpose) a}_{r}\underbrace{\beta\transpose}_{w\transpose},
\]
and hence the subspace can be updated similarly, and the model parameters are updated via
\begin{equation}
\begin{split}
c_{\rm new} &= c+ \mu (y-\hat{y}) U \beta,\\
e_{\rm new} &= e+ \mu (y-\hat{y}). \label{aupdate2}
\end{split}
\end{equation}

\begin{remark}[Difference from unsupervised tracking]
Due to an alternative formulation, update in OSDR differs from that in the unsupervised version \cite{GROUSE1, GROUSEproof}  in that the update in OSDR depends on $y - \hat{y}$. This is intuition since the amount of update we have on the subspace should be driven by the prediction accuracy for the response variable.
\end{remark}

\begin{remark}[Mini-batch update]
Instead of updating with every single new sample, we may also perform an update with a batch of samples. The Grassmannian gradient for this mini-batch update scheme can be derived as 
\[
\nabla \rho_\theta(x, y) \triangleq - \frac{1}{B}\sum_{i = t-B}^{t}(y_i - \hat{y}_i)(I-UU\transpose)x_i a\transpose.
\]
In this case the gradient is no longer rank-one. We may use a rank-one approximation of this gradient, or use the exact rank-$B$ update described in (\ref{subspace_update}), which requires computing an SVD of this gradient.
\end{remark}

\begin{remark}[Computational complexity]
The computational complexity of OSDR is quite low and it is $\mathcal{O}(Dd)$. The most expensive step is to compute $r = (I-UU\transpose)x$ or $r = (I-UU\transpose)a$, in the $d$- and $D$-formulations, respectively. This term can be computed as, for instance $x - U(U\transpose x)$, to have a lower complexity $\mathcal{O}(Dd)$ (otherwise the complexity is $\mathcal{O}(D^2d)$). 
\end{remark}

\subsection{Logistic regression}

For logistic regression, $y\in \{0, 1\}$. Define the sigmoid function \[h(x) = \frac{1}{1+e^{-x}}.\] The loss function for logistic regression corresponds to the negative log-likelihood function assuming $y$ is a Bernoulli random variable. For the $d$-formulation, the loss function is given by
\[
\rho_\theta(U\transpose x, y) = y\log h(a\transpose U\transpose x + b)
+ (1-y)\log (1-h(a\transpose U \transpose x +b)),
\]
and the model parameter $\theta = (a, b)$ with $a\in \mathbb{R}^{d}$ and $b \in \mathbb{R}$. For the $D$-formulation with a parameter estimate $\beta$, we have
\[
\varrho_\vartheta(U\beta, y) = y\log h(c\transpose U \beta  + e)
+ (1-y)\log (1-h(c\transpose U \beta + e))
\]
and the model parameter $\vartheta = (c, e)$ with $c\in \mathbb{R}^{D}$ and $e \in \mathbb{R}$. It can be shown that, in the logistic regression setting, the Grassmannian gradients with respect to $U$, for these two cases are given by
\[
\nabla \rho_\theta = -(y-\hat{y}) \underbrace{(I - UU\transpose)x}_{r} \underbrace{a \transpose}_{w\transpose},
\]
where the predicted response
\begin{equation}
\hat{y} \triangleq h(a\transpose U\transpose x +b),
\label{logistic_y}
\end{equation}
or
\[
\nabla \varrho_\vartheta = -(y - \hat{y})\underbrace{(I - UU\transpose)c}_{r} \underbrace{\beta\transpose}_{w\transpose},
\]
where the predicted response 
\begin{equation}
\hat{y} \triangleq h(c\transpose U \beta  +b).
\label{logistic_y}
\end{equation}
Note that the gradients for linear regression and logistic regression take a similar form, with the only difference being how the response is predicted: in linear regression it is defined linearly as in (\ref{linear_y}), and in logistic regression it is defined through the sigmoid function as in (\ref{logistic_y}). Hence, OSDR for linear regression and logistic regression take similar forms and only differs by what response function is used, as shown in Algorithm \ref{Alg:OSDR_linear}.

\begin{algorithm}
\renewcommand{\algorithmicrequire}{\textbf{Input:}}
\renewcommand\algorithmicensure {\textbf{Output:} }
    \caption{\label{Alg:OSDR_linear} OSDR for linear and logistic regressions}
    \label{alg:regression}
    \begin{algorithmic}[1]
       \REQUIRE $y_t$ and $x_t$ (missing data, given $x_t$ observed on an index set $\Omega_t$), $t = 1, 2, \ldots$. Initial values for $U$, $a$ and $b$ (or $c$ and $d$). Step-sizes $\eta$ and $\mu$. $\tilde{h}$ can be linear or sigmoid function. 
        \FOR {$t = 1, 2, \ldots$}
           \STATE  \COMMENT{$d$-formulation}\\
           $\hat{y} \leftarrow \tilde{h}(a\transpose U\transpose x + b)$,
           $r\leftarrow (I - UU\transpose) x$,\\
           $\sigma \leftarrow -(y - \hat{y})\|r\|\|a\|$
           \STATE  \COMMENT{$D$-formulation} \\
           $\beta \leftarrow (U_{\Omega}\transpose U_{\Omega})^{-1} U_{\Omega}\transpose x_{\Omega}$ \\
           $\hat{y}\leftarrow  \tilde{h}(c\transpose U \beta) + d$,
           $r\leftarrow (I-UU\transpose) c$,\\
           $\sigma \leftarrow -(y - \hat{y})\|r\|\|\beta\|$
            \STATE \COMMENT{update subspace}\\
           $U  \leftarrow U + \frac{\cos(\sigma \eta) - 1}{\|w\|^2}Uww\transpose + \sin(\sigma \eta)\frac{r\transpose}{\|r\|}\frac{w\transpose}{\|w\|}$

            \STATE \COMMENT{update regression coefficients and residuals} \\
           $a \leftarrow a + \mu (y-\hat{y}) U\beta $,
           $b \leftarrow b + \mu (y-\hat{y}) $
              \ENDFOR
    \end{algorithmic}
\end{algorithm}

\subsection{Multiple linear regression}

We may also extend OSDR to multiple linear regression, where $y \in \mathbb{R}^m$ for some integer $m$.
The loss functions for the $d$- and $D$-formulations are given by
\begin{align*}
\rho_\theta(U\transpose x, y) &= \|y - A\transpose U\transpose x \|_2^2, \\
\varrho_\vartheta(U \beta, y) &= \|y -C\transpose U\beta\|_2^2
\end{align*}
with $\theta = A \in \mathbb{R}^{d\times m}$, and $\vartheta = C \in \mathbb{R}^{D\times m}$. Here we assume the slope parameter is zero and this can be achieved by subtracting the means from the predictor vector and the response variable, respectively. 
It can be shown that
\[
\nabla \rho_\theta = -\underbrace{(I-UU\transpose)x}_{r} \underbrace{(y - \hat{y})\transpose A\transpose}_{w\transpose},\quad \hat{y} = A\transpose U\transpose x,
\]
and
\[
\nabla \varrho_\vartheta = - \underbrace{(I-UU\transpose)C (y - \hat{y})}_{r} \underbrace{\beta\transpose}_{w\transpose}, \quad \hat{y} = C\transpose U \beta.
\]
It can also be shown that the partial derivative of $\rho_\theta$ with respect to $A$ for a fixed $U$ is given by $-U\transpose x (y-\hat{y})\transpose$, and the partial derivative of $\varrho_\vartheta$ with respect to $C$ for a fixed $U$ is given by $-U\beta (y-\hat{y})\transpose$. OSDR for multiple linear regression is given in Algorithm \ref{alg:multi_linear}.

\begin{algorithm}
\renewcommand{\algorithmicrequire}{\textbf{Input:}}
\renewcommand\algorithmicensure {\textbf{Output:} }
    \caption{\label{Alg:OSDR} OSDR for multiple linear regression}
    \label{alg:multi_linear}
    \begin{algorithmic}[1]
         \REQUIRE $y_t$ and $x_t$ (missing data, given $x_t$ observed on an index set $\Omega_t$), $t = 1, 2, \ldots$. Initial values for $U$, $A$ and $b$ (or $C$ and $d$). Step-sizes $\eta$ and $\mu$.
        \FOR {$t = 1, 2, \ldots$}
           \STATE  \COMMENT{$d$-formulation}\\
           $\hat{y} \leftarrow A\transpose U\transpose x + b$,
           $r\leftarrow (I - UU\transpose) x$,\\
           $w\leftarrow (y - \hat{y})\transpose A\transpose$,
           $\sigma \leftarrow -\|r\|\|w\|$
           \STATE  \COMMENT{$D$-formulation} \\
           $\beta \leftarrow (U_{\Omega}\transpose U_{\Omega})^{-1} U_{\Omega}\transpose x_{\Omega}$ \\
           $\hat{y}\leftarrow  C\transpose U \beta + d$,
           $r\leftarrow (I-UU\transpose) C(y - \hat{y})$,\\ $w = \beta$,
           $\sigma \leftarrow -\|r\|\|w\|$
            \STATE \COMMENT{update subspace}\\
           $U  \leftarrow U + \frac{\cos(\sigma \eta) - 1}{\|w\|^2}Uww\transpose + \sin(\sigma \eta)\frac{r\transpose}{\|r\|}\frac{w\transpose}{\|w\|}$

            \STATE \COMMENT{update regression coefficients and residuals} \\
           $A \leftarrow A + \mu U\transpose x (y-\hat{y})\transpose$ \COMMENT{$d$-formulation} \\
           $C \leftarrow C + \mu U\beta(y-\hat{y})\transpose$ \COMMENT{$D$-formulation}
              \ENDFOR
    \end{algorithmic}
\end{algorithm}

\subsection{Multinomial logistic regression}

Multinomial logistic regression means that $y\in \{0, 1, \ldots, K-1\}$ for some integer $K$ is a categorical random variable and it is useful for classification. In the following, we focus on the $D$-formulation; the $d$-formulation can be derived similarly. The loss function is the negative likelihood function given by
\begin{equation}
\begin{split}
\rho_\theta(U\transpose x, y) &\triangleq -\sum_{k=0}^{K-2} \mathbb{I}\{y= j\} \log \left(\frac{e^{a_k\transpose U\transpose x + b_k}}{1
+\sum_{j=0}^{K-2} e^{a_k\transpose U\transpose x+b_k}}\right) \\
&\quad - \mathbb{I}\{y= K-1\} \log \left(\frac{1}{1+\sum_{j=0}^{K-2} e^{a_k\transpose U\transpose x+b_k}}\right).
\end{split}\nonumber
\end{equation}
In this case, the Grassmannian gradient will no longer be rank-one but rank-$K$, with
\[\nabla \rho_\theta = -(I-UU\transpose)\Sigma,\]
and
\begin{equation}
\begin{split}
\Sigma &= \sum_{k=0}^{K-2} \mathbb{I}\{y = k\}
           \left[a_k \beta\transpose - \frac{1}{1+e^{a_k\transpose U \beta + b_k}}
           \sum_{l = 0}^{K-1} e^{a_l\transpose U \beta + b_l} a_l \beta\transpose
           \right]\\
           &+ \mathbb{I}\{y = K-1\}\left[\frac{-e^{-a_{K-1}\transpose U \beta -b_{K-1}}}{1
          +\sum_{k=0}^{K-1} e^{a_k\transpose U \beta + b_k }}\right]\sum_{l = 0}^{K-1} e^{a_l\transpose U \beta + b_l} a_l \beta\transpose.
\end{split}
\label{big_Sigma}
\end{equation}
Note that $\Sigma$ consists of a sum of $K$ terms and, hence, is usually rank-$K$. We no longer have the simple expression to calculate update of $U$ along the geodesic and the precise update requires performing a (reduced) singular value decomposition of the gradient
\[\nabla \rho_\theta \triangleq P \Sigma Q\transpose,\]
where $\Sigma \in \mathbb{R}^{K\times K}$ is a diagonal matrix with the diagonal entries being the singular values. Using Theorem 2.3 in \cite{edelman1998geometry}, we update $U$ as
\begin{equation}
U = \begin{bmatrix} UQ & P\end{bmatrix}
\begin{bmatrix}
\cos(\Sigma \eta)\\
\sin(\Sigma \eta)
\end{bmatrix} Q\transpose,\label{subspace_update}
\end{equation}
where $\eta > 0$ is the step-size. Alternatively, we may use the rank-one approximation to the Grassmannian gradient to derive, again, a simple update, which is given by Algorithm \ref{alg:multinomial}.

\begin{algorithm}
\renewcommand{\algorithmicrequire}{\textbf{Input:}}
\renewcommand\algorithmicensure {\textbf{Output:} }
    \caption{OSDR for $K$-multinomial logistic regression}
    \label{alg:multinomial}
    \begin{algorithmic}[1]
       \REQUIRE $y_t$ and $x_t$ (missing data, given $x_t$ observed on an index set $\Omega_t$), $t = 1, 2, \ldots$. Initial values for $U$, $a_k$. Step-sizes $\eta$ and $\mu$.
        \FOR {$t = 1, 2, \ldots$}
         \STATE  \COMMENT{$D$-formulation} \\
           $\beta \leftarrow (U_{\Omega}\transpose U_{\Omega})^{-1} U_{\Omega}\transpose x_{\Omega}$ \\
                      \STATE \COMMENT{predict response} \\
           $\hat{y}_k \propto e^{a_k\transpose U \beta  + b_k}$, $k = 1, \ldots, K-1$\\
           $\hat{y}_k \leftarrow \hat{y}_k/(1+\sum_{l=1}^{K-1}\hat{y}_l)$\\
           $\hat{y}_{K} \leftarrow 1 - \sum_{l=1}^{K-1}\hat{y}_l$, \\
          \STATE compute $\Sigma$ using (\ref{big_Sigma})
           \STATE find the dominant singular value $\sigma$, corresponding left singular vector $\rho$ and right singular vector $r$ for $(I-UU\transpose)\Sigma$
           \STATE \COMMENT{update subspace}\\
           $p \leftarrow U r$\\
           $U \leftarrow U + \frac{\cos(\sigma \eta) - 1}{\|r\|} U rr\transpose
           + \sin(\sigma \eta)\frac{\rho}{\|\rho\|}\frac{r\transpose}{\|r\|}
           $
           \STATE \COMMENT{Update regression coefficients} \\
          $h = \mathbb{I}\{y=k\}-\frac{\sum_{l=1}^{K-1}\mathbb{I}\{y = l\} e^{a_l\transpose U \beta + b_l}}{1+\sum_{l=1}^{K-1}e^{a_k\transpose U \beta + b_k}}$\\
           $\qquad- \mathbb{I}\{y = K\}\frac{e^{a_k\transpose U \beta + b_k}}{1+\sum_{l=1}^{K-1}e^{a_k\transpose U \beta + b_k}}$\\
           $a_{k} \leftarrow a_{k} + \mu h U \beta, \quad k= 1, \ldots, K-1 $\\
           $b_{k} \leftarrow b_{k} + \mu h, \quad k = 1, \ldots, K-1$
        \ENDFOR
    \end{algorithmic}
\end{algorithm}

\subsection{Support vector machine (SVM)}

The loss function for SVM is the hinge loss. For the $d$- and $D$-formulations, the loss functions are
\[
\rho_\theta(U\transpose x, y) = \max\{0, 1-y a\transpose U\transpose x\},
\]
where $\theta = a\in \mathbb{R}^d$, and
\[
\varrho_\vartheta(U \beta, y) = \max\{0, 1- y c\transpose U \beta\},
\]
where $\theta = c\in \mathbb{R}^D$. Note that the loss function is not differentiable. We may use its sub-gradient to perform gradient descent, or find a smooth surrogate function to approximate the hinge loss.
The Grassmannian sub-gradients for the two loss functions are 
\[
\nabla \rho_\theta = \frac{y}{2}[\mbox{sgn}(ya\transpose U\transpose x + 1) +1]\underbrace{(I-UU\transpose)x}_{r} \underbrace{a\transpose}_{w\transpose},
\]
and
\[
\nabla \varrho_\vartheta = \frac{y}{2}[\mbox{sgn}(yc\transpose U \beta + 1) +1]\underbrace{(I-UU\transpose)c}_{r} \underbrace{\beta\transpose}_{w\transpose}.
\]
These gradients are again rank-one and, hence, we may update $U$ along geodesic efficiently.

\subsection{Random dot product graph model} 

The random dot product graph model is useful for relational data which usually occurs in social network study \cite{NickelThesis2006,YoungScheinerman2007,randomDot2}.
The model assumes that each node is associated with a feature $\beta_i$, and an edge between two nodes are formed with a probability proportional to the inner product between their features $\beta_j\transpose \beta_j$. 
Suppose at each time $t$, we observe a pair of nodes in the network with predictor vectors $x_{1, t}$ and $x_{2, t}$ as well as their relation indicator variable $y_t \in \{0, 1\}$ (i.e., an edge is formed or not). We assume a logistic regression model $\mathbb{P}(y_t = 1) = h(a_t\beta_{1, t}\transpose \beta_{2, t} + b_t)$ for some feature vectors $\beta_{1, t}$ and $\beta_{2, t}$ that are projections of $x_{1, t}$ and $x_{2, t}$. Here our goal is to choose a proper subspace that fits the data nicely. 
 
Note that given $x_{1}$ and $x_{2}$, the inner product can be estimated as $x_{1} \transpose UU\transpose x_{2}$, which involves a quadratic term in $U$ (rather than linear in $U$ as in other cases). To be able to obtain the rank-one structure, we use a two-step strategy: first fix $\beta_{2} = U\transpose x_{2}$ and update $U$, and then fix $\beta_{1} = U\transpose x_{1}$ and update $U$. The log-likelihood function for fixed $\beta_{2} = U\transpose x_{2}$ is given by
\begin{align*}
\varrho_\vartheta(U\beta_2, y) &= y \log h(a  x_1\transpose U \beta_2 + b)\\
&~~~~~+ (1-y) \log (1-h(a  x_1 \transpose U \beta_2 + b)).
\end{align*}
Similar to logistic regression,
\begin{equation*}
\nabla \varrho_\vartheta = (y - h(a x_1\transpose U \beta_2+b))\underbrace{(I-UU\transpose) ax_1}_{r} \beta_2\transpose,
\label{grouse_grad}
\end{equation*}
which is rank-one and we may update the subspace similarly. 
In the second step, the log-likelihood function for fixed $\beta_1 = U\transpose x_1$ is given by
\begin{align*}
\varrho_\vartheta(U\beta_1, y) &= y \log h(a \beta_1\transpose U\transpose x_2 + b) \\
& ~~~~~~~ + (1-y) \log (1-h(a \beta_1\transpose U\transpose x_2 + b)),
\end{align*}
and the subspace $U$ can be updated similarly. Finally, we fix $U$ (and hence fix $\beta_1 = U\transpose x_1$ and $\beta_2 = U\transpose x_2$) and update the logistic regression parameters as
\[
a_{\rm new} = a + \mu(y - h(a \beta_1\transpose \beta_2 + b))\beta_1\transpose \beta_2,
\]
\[
b_{\rm new} = b + \mu(y - h(a \beta_1\transpose \beta_2 + b)).
\]
Description of the complete algorithm is omitted here as it is similar to the case of logistic regression.

\subsection{Union-of-subspaces model} 

Union-of-subspaces model \cite{high_rank_MC12} and multi-scale union-of-subspace model \cite{SSP12,AllardChenMaggioni2011,MOUSSE2013} have been used to approximate manifold structure of the state. As illustrated in Fig. \ref{tree}, the multi-scale union-of-subspace is a tree of subsets defined on low-dimensional affine spaces
$
\bigcup_{(j, k) \in\mathcal{A}_t}
\mathcal{S}_{j, k, t}, $
with each of these subsets lying on a low-dimensional hyperplane with
dimension $d$ and is parameterized by
\begin{equation*}
  \begin{split}
    \mathcal{S}_{j, k, t}
     = \{\beta \in \mathbb{R}^d: & \quad v = U_{j, k, t} \beta + c_{j, k, t},\\
    & \quad \beta \transpose \Lambda_{j, k, t}^{-1} z \leq
    1, \quad \beta \in \mathbb{R}^d\},
  \end{split} 
\end{equation*}
where $j \in \{1, \ldots, J_t\}$ denotes the scale or level of the subset in the tree, $J_t$ is the tree depth at time $t$, and $k \in \{1, \ldots, 2^j\}$ denotes the index of the subset for that level. 
 The matrix $ U_{j, k, t} \in\mathbb{R}^{D\times d} $ is the
subspace basis, and $c_{j, k, t} \in \mathbb{R}^{D}$ is the offset of
the subset from the origin. The diagonal matrix \[{\Lambda}_{j, k,
  t} \triangleq \mbox{diag}\{\lambda_{j, k, t}^{(1)}, \ldots,
\lambda_{j, k, t}^{(d)}\} \in \mathbb{R}^{d\times d},\] with
$\lambda_{j, k, t}^{(1)}\geq \ldots \geq \lambda_{j, k, t}^{(d)}\geq
0$, contains eigenvalues of the covariance matrix of the projected
data onto each hyperplane. This parameter specifies the shape of the ellipsoid by capturing the spread of the data within the hyperplanes. 

We may couple the subspace tree with regression model, by attaching a set of regression coefficients $\{a_{j, k, t}, b_{j, k, t}\}$ with each subset. Given $x$, we may find a subset in the union that has the smallest affinity, and use that subset to estimate $\beta$ by projection onto the associated subspace and use the associated (linear or logistic) regression coefficients to predict $y$. The affinity can be a distance to the subset similar to what has been used for discriminate analysis or in \cite{MOUSSE2013},
\[
(j^*, k^*)= \arg\min_{j, k} \min_w (x - U_{j, k, t} w)^\top \Lambda_{j, k, t} (x - U_{j, k, t} w),
\]
or simply the distance to a subspace
\[
(j^*, k^*)= \arg\min_{j, k} \min_w \|x - U_{j, k, t} w\|.
\]
Then we predict the local coefficient associated with that subset.
OSDR can be derived for these models by combining a step of finding the subset with the smallest affinity with subsequent subspace and parameter update similar to the linear or logistic regression OSDR. 

We may also use this model together with the random dot product graph model for social networks, where two users may belong to two different subsets in the tree and their interaction is determined by the regression model associated with their common parent in the tree. This may capture the notion of community: each node represents one community and there is a logistic regression model for each community. The probability that two users interact is determined by the ``smallest'' community that they are both in. In this case, OSDR will be two-stage: classification based on affinity function followed by a two-step subspace update similar to the OSDR for the random dot product model. Section \ref{eg:union} presents one such example for illustration. 

\begin{figure}
\begin{center}
\includegraphics[width = .45\linewidth]{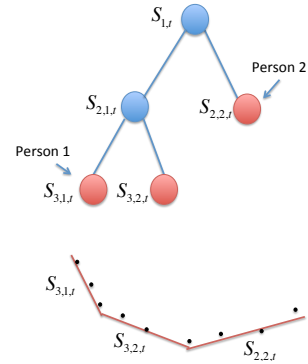}
\vspace{-0.1in}
\end{center}
\caption{Multi-scale union-of-subspaces model \cite{AllardChenMaggioni2011}. Data are bi-partitioned iteratively to form nested subsets, and a low-dimensional affine space (with offset from the origin) is fitted for data in each subset. The bi-partitioning creates a binary tree with multi-scale representation of the data: nodes of the tree contain parameters for that subset, and  leaves represent the union-of-subspaces used at time $t$. The lower panel illustrates the structure of these subspaces. The black dots represent historical data $x_t$ used to estimate the subspace. This model can be used in conjunction with, for example, a random dot product graph to model social networks: at each time $t$, two persons with features $\beta_1$ and $\beta_2$ from two subsets interact with probability proportional to the inner product $\beta_1\transpose \beta_2$.
  }
  \vspace{-0.1in}
\label{tree}
\end{figure}

\section{Theoretical analysis}\label{sec:theory}

The general OSDR problem is hard to analyze due to its non-convexity and bilinearlity in $U$ and the model parameters. In this section, we study the optimization problem for linear regression with the $D$-formulation to obtain some theoretical insights. 
In the linear regression case, the loss function is the $\ell_2$ norm
$
\varrho_\vartheta (U \beta, y)=\mathbb (y-c^\intercal U \beta)^2.
$
When there is no missing data, the projection coefficient is given by $\beta= U^\intercal x$. In a simplified setting, assume the response variable is generated with the parameter $c$: $
y = c^\intercal x.$ Then the loss function is given by
\[
\varrho_\vartheta(U \beta, y)  =(c^\intercal (I-UU^\intercal )x)^2. \]

\subsection{Fixed-point with respect to $U$}

First, we show that for a fixed model parameter, the optimization problem with respect to the subspace $U$ will converge to an orthonormal matrix even without the orthogonality constraint. We make a simplifying assumption that the true response is linear with parameter equal to the assumed parameter $c$: $y = c\transpose x$, then we have that one step in the alternating minimization can be written as
%
%
\begin{equation}\label{opt:b}
\begin{array}{rl}
\underset{U}{\mbox{minimize}}\ & \mathbb (c\transpose (I-UU^\intercal)x)^2\\
{\rm subject\ to\ }& U\transpose U = I_d.
\end{array}
\end{equation}
This problem is non-convex due to the constraints as well as the quadratic term $UU^{\intercal}$ in the objective function.
Without loss of generality, we may assume $\Vert c \Vert_2 =1$. 
Construct a matrix $C_0$ with $c$ being its first column.
Then we consider the following optimization problem related to (\ref{opt:b}) that will help us to establish properties later.
\begin{equation}\label{opt:block}
\begin{array}{rl}
\underset{U}{\rm minimize} & L(U, C_0)\triangleq \mathbb E\Vert C_0^\intercal (I-UU^\intercal)x\Vert_2^2\\
{\rm subject\ to\ } & U^\intercal U = I_d.
\end{array}
\end{equation}

\begin{theorem}\label{thm:stationary}
Given a fixed orthogonal matrix $C_0$, the stationary point $U^*$ to the optimization problem (\ref{opt:block}) without the constraint:
\begin{equation}\label{opt:blockcnvx}
\underset{U}{\rm minimize}~ \mathbb E \Vert C_0^\intercal (I-UU^\intercal)x\Vert_2^2,
\end{equation}
are orthogonal matrices of size $D\times d$ whose columns are $d$ largest eigenvectors of the covariance matrix $X$ of data $x$. Assume $X$ has $d$ distinct dominant eigenvalues. 
\end{theorem}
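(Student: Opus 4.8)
The plan is to first eliminate $C_0$ entirely. Because $C_0$ is a square $D\times D$ orthogonal matrix, $C_0\transpose$ preserves Euclidean norm, so $\|C_0\transpose(I-UU\transpose)x\|_2^2 = \|(I-UU\transpose)x\|_2^2$ and the objective in (\ref{opt:blockcnvx}) collapses to the ordinary PCA reconstruction error $L(U) = \mathbb{E}\|(I-UU\transpose)x\|_2^2$, independent of the particular $C_0$ (we only needed $c$ in its first column to tie this back to (\ref{opt:b})). Writing $M = UU\transpose$ and $X = \mathbb{E}[xx\transpose]$, I would expand $L(U) = \trace(X) - 2\trace(U\transpose X U) + \trace(U\transpose U U\transpose X U)$, so that the entire dependence on $U$ passes through the symmetric positive-semidefinite matrix $M$ of rank at most $d$.

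Next I would compute the Euclidean gradient directly. Differentiating the quadratic and quartic terms and using symmetry of $X$ gives $\nabla_U L = -4XU + 2XUU\transpose U + 2UU\transpose XU$, so every stationary point satisfies the matrix equation $(XM + MX - 2X)U = 0$. Since the columns of $U$ span $\operatorname{range}(M)$, this is equivalent to $(XM+MX-2X)p = 0$ for every eigenvector $p$ of $M$ with nonzero eigenvalue $\mu$, which rearranges into the key eigen-relation $MXp = (2-\mu)Xp$. In words, at a stationary point each $Xp$ is again an eigenvector of $M$ (or is zero), which severely constrains the spectrum of $M$.

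The heart of the argument is to turn this relation into orthonormality plus an eigenspace condition. I would diagonalize $M = \sum_i \mu_i p_i p_i\transpose$ with orthonormal $p_i$ and substitute into $L$; because the $p_i$ are orthonormal one obtains the separable form $L = \trace(X) + \sum_i (p_i\transpose X p_i)(\mu_i^2 - 2\mu_i)$. Each coefficient $p_i\transpose X p_i \ge 0$, so every term is minimized over $\mu_i \ge 0$ at $\mu_i = 1$; hence at any minimizer all active eigenvalues of $M$ equal one, i.e. $M$ is an orthogonal projector and $U\transpose U = I_d$ --- orthonormality emerges without being imposed. With $\mu_i = 1$ the eigen-relation becomes $MXp_i = Xp_i$, so $\operatorname{range}(M)$ is invariant under the symmetric matrix $X$ and is therefore spanned by eigenvectors of $X$. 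Finally, on projectors $L = \trace(X) - \trace(P\transpose X P)$, and maximizing $\trace(P\transpose X P)$ over orthonormal $P$ is exactly the Rayleigh--Ritz/Ky Fan problem, whose maximizer is the span of the $d$ largest eigenvectors; the distinct-dominant-eigenvalue assumption makes this subspace unique, up to an irrelevant $d\times d$ rotation of the columns that leaves $M=UU\transpose$ fixed.

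The step I expect to be the main obstacle is precisely the passage from the stationarity equation to orthonormality. The quartic term $\trace(M^2X)$ makes $\nabla_U L = 0$ a genuinely nonlinear condition with many solutions --- for instance $U = 0$, and various non-orthonormal saddle configurations in which $Xp_i$ mixes the $p_j$ --- so the statement is really about the minimizers rather than all critical points. The separable reparametrization above is what lets me cleanly discard these: it shows that any $\mu_i \neq 1$ with $p_i\transpose X p_i > 0$ can be strictly improved, certifying that such points are not local minima, and it reduces the remaining selection to a classical spectral optimization. Some care is also needed for degenerate directions with $p_i\transpose X p_i = 0$ (equivalently $Xp_i = 0$), which contribute nothing to $L$ and can be absorbed into the null space, and for the rank-deficient case $\operatorname{rank}(U) < d$, which I would handle by allowing some $\mu_i = 0$ and noting the objective still forces the remaining mass onto the dominant eigenvectors.
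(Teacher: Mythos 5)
Your proof is correct, but it takes a genuinely different route from the paper's. The paper keeps the general weighting $\tilde C = C_0C_0\transpose$, computes the Euclidean partial derivative of $L(U,C_0)=\trace\lsb (X - XUU\transpose - UU\transpose X + UU\transpose XUU\transpose)\tilde C\rsb$, specializes to $\tilde C = I_D$ using orthonormality of $C_0$, and then simply cites Lemma~\ref{lemma:stationary1} (Yang, 1995) --- the known characterization of the stationary points of $J(U)=\trace(C_0)-2\trace(U\transpose C_0U)+\trace(U\transpose C_0U\, U\transpose U)$ --- to read off the conclusion. You perform the same reduction more directly (a square orthogonal $C_0$ preserves the Euclidean norm, so the objective is the plain PCA reconstruction error) and then, instead of invoking the cited lemma, you reprove its relevant content from scratch: the stationarity equation $(XM+MX-2X)U=0$ with $M=UU\transpose$, the eigen-relation $MXp=(2-\mu)Xp$, the separable form $L=\trace(X)+\sum_i(\mu_i^2-2\mu_i)\,p_i\transpose Xp_i$ forcing all active eigenvalues of $M$ to one (so orthonormality of $U$ emerges from optimality rather than being imposed), invariance of $\operatorname{range}(M)$ under $X$, and Rayleigh--Ritz/Ky Fan to select the dominant eigenspace. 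What the paper's route buys is brevity, with the saddle-versus-minimum classification inherited wholesale from the citation; what your route buys is a self-contained argument plus a sharper reading of the statement: you correctly observe that the claim is really about minimizers, since $U=0$ (and, per the cited lemma itself, any non-dominant eigenvector frame) is also stationary, so the ``only stationary points'' phrasing requires a rank/minimality qualification --- a gap the paper papers over by citation. Two spots to tighten if you write this out in full: justify that varying a single $\mu_i$ with the orthonormal frame $\{p_i\}$ fixed stays inside the feasible set $\{M\succeq 0,\ \operatorname{rank}(M)\le d\}$ (immediate, but it is exactly what licenses the separable optimization), and make the rank-deficient case explicit: at a global minimizer $\operatorname{rank}(M)=d$ because appending a fresh eigendirection with $\mu=1$ and $p\transpose Xp>0$ strictly decreases $L$, which is precisely where the hypothesis that $X$ has $d$ distinct dominant (positive) eigenvalues enters and also yields the uniqueness of the minimizing subspace up to the right-multiplication $U\mapsto UQ$.
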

We need the following lemma for the proof.
\begin{lemma}[\cite{yang1995projection}]\label{lemma:stationary1}
Let $C_0\in \mathbb R^{D\times D}$ be a positive semi-definite matrix and $U\in \mathbb R^{D\times d}$.
For a function $J: \mathbb R^{D\times d}\mapsto \mathbb{R}$ defined as
\[
J(U) = {\rm tr} (C_0) - 2{\rm tr} (U^\intercal C_0 U) + {\rm tr} (U^\intercal C_0 U\cdot U^\intercal U),
\]
the gradient of $J$ is
\[
\nabla J(U) = -2[2C_0-C_0UU^\intercal-UU^\intercal C_0] U.
\]
$U^{*}$ is a stationary point of $J(U)$ if and only if $U^{*} = U_d Q$, where $U_d\in \mathbb R^{D\times d}$ contains any $d$ distinct eigenvectors of $C$ and $Q\in R^{d\times d}$ is an arbitrary orthonormal matrix. Moreover, all stationary points of $J(U)$ are saddle points except when $U_d$ contains the $d$ dominant eigenvectors of $C_0$, in which case $J(U)$ attains the global minimum at $U^{*}$.
\end{lemma}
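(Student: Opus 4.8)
The plan is to prove the lemma in four stages: (i) compute $\nabla J$; (ii) reduce the stationarity condition; (iii) show the critical points are exactly $U_dQ$; and (iv) classify them by a second-order analysis. For the gradient I would use matrix differentials. The term ${\rm tr}(C_0)$ is constant; differentiating $-2\,{\rm tr}(U\transpose C_0U)$ with $C_0=C_0\transpose$ gives $-4C_0U$; and collecting the four terms in the differential of ${\rm tr}(U\transpose C_0U\,U\transpose U)$ via the cyclic property of the trace yields $2C_0UU\transpose U+2UU\transpose C_0U$. Summing reproduces the stated $\nabla J(U)=-2[2C_0-C_0UU\transpose-UU\transpose C_0]U$. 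As a bookkeeping identity I would also record $J(U)={\rm tr}\big((I-UU\transpose)C_0(I-UU\transpose)\big)=\|(I-UU\transpose)G\|^2\ge0$ for any $G$ with $GG\transpose=C_0$; this confirms $J\ge0$ and will supply the global-minimum claim.

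Setting $\nabla J=0$ rewrites as $C_0U(2I_d-U\transpose U)=U(U\transpose C_0U)$. For a full-column-rank $U$ at which $2I_d-U\transpose U$ is invertible, the right-hand side lies in ${\rm range}(U)$, so right-multiplication by $(2I_d-U\transpose U)^{-1}$ gives ${\rm range}(C_0U)\subseteq{\rm range}(U)$. Hence the column space is $C_0$-invariant and, since $C_0$ is symmetric with distinct eigenvalues, it is spanned by $d$ eigenvectors. Writing $U=U_dM$ with $U_d$ an orthonormal eigenbasis ($C_0U_d=U_d\Lambda_S$, $\Lambda_S={\rm diag}(\mu_i)_{i\in S}$) and $M$ invertible, substitution, left-multiplication by $U_d\transpose$, and right-multiplication by $M^{-1}$ collapse the stationarity condition to $2\Lambda_S=\Lambda_S N+N\Lambda_S$ with $N\triangleq MM\transpose\succ0$. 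Reading this entrywise, the $(i,j)$ relation is $2\mu_i\delta_{ij}=(\mu_i+\mu_j)N_{ij}$; using the distinct, nonnegative $\mu_i$ forces $N_{ij}=0$ for $i\ne j$ and $N_{ii}=1$, i.e. $N=I_d$. Thus $M$ is orthonormal and $U^*=U_dQ$, exactly the claimed form.

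For the saddle-versus-minimum dichotomy I would work from the identity above, which on the orthonormal set reduces $J$ to ${\rm tr}(C_0)-{\rm tr}(U\transpose C_0U)$, so $J(U_dQ)={\rm tr}(C_0)-\sum_{i\in S}\lambda_i$. If $S$ omits a dominant index $q$ in favour of a smaller $\lambda_p$, the on-manifold rotation $u_p\mapsto\cos\epsilon\,v_p+\sin\epsilon\,v_q$ (with $v_i$ the eigenvector of $\lambda_i$) decreases $J$ with curvature $-2(\lambda_q-\lambda_p)<0$, whereas the off-manifold scaling $U\mapsto(1+\epsilon)U$ increases $J$ with curvature $8\sum_{i\in S}\lambda_i>0$; the resulting indefinite Hessian certifies a saddle. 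When $S$ indexes the $d$ dominant eigenvectors, $\sum_{i\in S}\lambda_i$ is maximal, and by the Eckart--Young bound applied to $J=\|(I-UU\transpose)G\|^2$ this value is the global minimum over \emph{all} $U$, not merely the orthonormal ones.

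I expect stage (iii) to be the main obstacle, specifically the nondegeneracy hypotheses it quietly uses: one must justify that the critical points of interest have full column rank and that $2I_d-U\transpose U$ is invertible before invoking the invariance argument, and must separately dispose of the rank-deficient critical points (e.g. $U=0$), which are not of the form $U_dQ$ and have to be shown non-minimizing. The step $N=I_d$ also relies on $\mu_i+\mu_j>0$, which is exactly where the assumption of distinct positive dominant eigenvalues enters; handling the boundary case of an eigenvalue of $U\transpose U$ equal to $2$, and the possibility of zero eigenvalues of $C_0$, are the technical points I would treat most carefully.
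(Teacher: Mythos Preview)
The paper does not prove this lemma; it is quoted verbatim from \cite{yang1995projection} and invoked as a black box inside the proof of Theorem~\ref{thm:stationary}. So there is no in-paper argument to compare against, and your proposal is effectively a reconstruction of the cited result rather than of anything the authors wrote.

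That said, your reconstruction is sound and tracks the standard route in \cite{yang1995projection}. The differential computation of $\nabla J$ is correct; the rewriting $J(U)=\|(I-UU^\intercal)G\|_F^2$ is exactly the right bookkeeping device; the invariance step $C_0U(2I_d-U^\intercal U)=U(U^\intercal C_0U)\Rightarrow \mathrm{range}(C_0U)\subseteq\mathrm{range}(U)$ is clean; and the reduction to the Sylvester-type relation $2\Lambda_S=\Lambda_S N+N\Lambda_S$ with $N=MM^\intercal$ is the key algebraic move. Your second-order analysis (in-manifold rotation toward a missing dominant eigenvector versus radial scaling) correctly certifies indefiniteness at the non-dominant critical points.

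You have also put your finger on precisely the soft spots of the statement as written: the characterization ``$U^*=U_dQ$'' tacitly assumes full column rank (otherwise $U=0$ is a stationary point not of this form), invertibility of $2I_d-U^\intercal U$, and strict positivity of the selected eigenvalues so that $(\mu_i+\mu_j)N_{ij}=0$ forces $N_{ij}=0$. These caveats are genuine and are handled in \cite{yang1995projection} under corresponding nondegeneracy hypotheses; the lemma as restated here (and as used in the paper, where $C_0$ later becomes the identity and attention is on the data covariance $X$ with $d$ distinct dominant eigenvalues) is meant to be read under those assumptions. Your awareness of them is appropriate, and none of them undermines the use the paper makes of the lemma.
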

\begin{proof}[Proof of Theorem \ref{thm:stationary}]
Let $X$ denote the covariance matrix of $x$: $X = \mathbb{E}[xx\transpose]$, and let $\tilde{C} = C_0C_0\transpose$.
We may write the objective function as
\begin{align*}
L(U, C_0)
&= \mathbb{E}[{\rm tr}(C_0^\intercal (I-UU^\intercal)xx^\intercal (I-UU^\intercal) C_0)]\\
&= {\rm tr}( (X-XUU^\intercal -UU^\intercal X + UU^\intercal XUU^\intercal) \tilde{C})\\
&=-2[\tilde{C}(I-UU^\intercal)X + X(I-UU^\intercal \tilde{C})]U.
\end{align*}
Then the partial derivative of $L(U, C_0)$ with respect to $U$ is then given by
\begin{align*}
\frac{d L(U, C_0)}{d U}
&= - \frac{d ({\rm tr}(XUU^\intercal \tilde{C}))}{dU} - \frac{d({\rm tr}
(UU^\intercal X \tilde{C}))}{dU} \\
&~~~+ \frac{({\rm tr}(UU^\intercal XUU^\intercal \tilde{C}))}{dU}\\
&= -2(\tilde{C}UU^\intercal X + XUU^\intercal \tilde{C} - \tilde{C}X-X\tilde{C})U.
\end{align*}
If we choose the columns of $C_0$ properly that $C_0=(c, c_2,\ldots, c_D)$ is orthonormal, we have $\tilde{C}=C_0C_0\transpose=I_D$, and thus,
 \[
\frac{d L(U,C_0)}{dU}= -2[(I-UU^\intercal)X + X (I-UU^\intercal )]U.
\]
With the equation above together with Lemma \ref{lemma:stationary1}, we have that the only stationary points of the optimization problem $U^{*}$ are $d$ distinct dominant eigenvectors of the matrix $X$ (assuming $X$ is full-rank).

\end{proof}

\subsection{Convergence}

In the same setting, if we fix the model parameter, we may establish the following local convergence property with respect to the Grassmannian gradient of $U$. Suppose the case where $x$ is exactly on the subspace ${U^*}$, and $x=U^* s$.
We use $\phi_i(U_t, U^*)$, the principal angles between $U_t$ and the true subspace ${U^*}$, which is defined as
\[
\cos \phi_i(U_t, U^*) = \sigma_i({U^*}\transpose U_t), \ i\in \llbracket d\rrbracket
\]
as a metric.
Further define
\[
\epsilon_t \triangleq \sum_{i=1}^{d} \sin^2 \phi_i (U^*, U_t) = d-\| {U^*}^\intercal U_t \|_F^2.
\]
Note that when there is no missing data, $p=U\beta$, $r=(I-UU\transpose)a$, and
\[
r\transpose p = 0.
\]
Typically we can assume  $y_t-\hat{y}_t\neq 0$. Hence, we can choose a set of step-sizes $\mu_t>0$ properly such that
\begin{equation}\label{rightangle}
\Vert r \Vert^2 = \Vert x \Vert^2 - \| p \|^2 = \| s \|^2 - \|\beta\|^2.
\end{equation}
Define $\theta_t$ such that
\[
\cos \theta_t  = \frac{\| p_t \|}{\| x_t \|},
\quad
\sin \theta_t = \frac{\| r_t \|}{\|x_t\|}.
\]
If such $\mu_t>0$ exists, we may choose the constant $c_t = 1$; otherwise we may choose $c_t$ accordingly to satisfy (\ref{rightangle}).
When there is no missing data, it is easy to check that Lemma 3.1 in \cite{GROUSEproof} still applies:
if we choose the step size $\eta_t$ such that
$
\eta_t = \theta_t/\sigma_t,
$
then we have
\begin{equation}
\epsilon_t - \epsilon_{t+1} = (1-\frac{\beta_t^\intercal A_tA_t\beta_t}{\beta_t^\intercal \beta_t}),
\end{equation}
where
$
A_t = U_t\transpose {U^*}.
$

Next, we establish conditions for the alternating minimization used in OSDR to converge. The following lemma from \cite{niesen2009adaptive} comes handy.
\begin{lemma}[Theorem 4 in \cite{niesen2009adaptive}]\label{lemma:cited}
Let $(\mathcal M, d)$ be a compact metric space. Given two sets $\mathcal P,\mathcal Q\subset \mathcal M$, define the Hausdorff distance between them as 
\begin{align*}
d_{H} (\mathcal P, \mathcal Q) \triangleq \max\Big\{ &{\rm sup}_{P \in \mathcal P} {\rm inf}_{Q\in \mathcal Q} d(P, Q) ,\\& {\rm sup}_{Q \in \mathcal Q} {\rm inf}_{P\in \mathcal P} d( P,  Q)\Big\}.
\end{align*}
Let $\{(\mathcal P_n, \mathcal Q_n)\}_{n\geq 0}$, $\mathcal P$, $\mathcal Q$ be compact subsets of the compact metric space $(\mathcal M, d)$ such that
\[
\mathcal P_n \xrightarrow{d_H}\mathcal P, \mathcal Q_n \xrightarrow{d_H} \mathcal Q
\]
 and let $\ell: \mathcal M\times \mathcal M\rightarrow \mathbb R$ be a continuous function. If there exists a function $\delta: \mathcal M\times \mathcal M\rightarrow \mathbb R$, and the following two conditions hold: \\ 
(a) for all $n\geq 1$, $P\in \mathcal P_n$, $\tilde{Q}\in \mathcal Q_{n-1}$, $\tilde{P} = \arg \min_{P\in\mathcal P_n} \ell(P,\tilde{Q})$
\[
\delta(P,\tilde{P}) +  \ell(\tilde{P},\tilde{Q})\leq \ell(P,\tilde{Q}),
\]
(b) for all $n\geq 1$, $P,\tilde{P}\in \mathcal P_n$, ${Q}\in \mathcal Q_n$, $\tilde{Q} = \arg \min_{Q\in\mathcal Q_n} \ell(\tilde{P},{Q})$,
\[
\ell(P,\tilde{Q})\leq \ell(P,Q) +\delta(P,\tilde{P}),
\]
then the adaptive alternating minimization algorithm 
\[
P_n^* \in \arg \min_{P\in \mathcal P_n} \ell (P, Q_{n-1}),
\]
\[
Q_n^* \in \arg \min_{Q\in \mathcal Q_n} \ell (P_n, Q),
\]
with $(P^*_0, Q^*_0)$ as an initialization converges.
\end{lemma}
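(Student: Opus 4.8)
The plan is to read Lemma~\ref{lemma:cited} as an \emph{adaptive} version of the classical alternating-minimization convergence theorem (Csiszar--Tusnady, later streamlined by Gunawardana--Byrne): conditions~(a) and~(b) are exactly the ``three-point'' and ``four-point'' properties, with $\delta$ playing the role of the auxiliary divergence, the one genuinely new feature being that the feasible sets $\mathcal P_n,\mathcal Q_n$ drift and only converge in Hausdorff distance to $\mathcal P,\mathcal Q$. I would aim first at \emph{value} convergence, i.e. $\ell(P_n,Q_n)\to \ell^\star:=\inf_{P\in\mathcal P,\,Q\in\mathcal Q}\ell(P,Q)$, and organize the argument in three stages: (i) construct an almost-monotone value sequence; (ii) rule out a strictly suboptimal limit by a descent-to-$-\infty$ contradiction; (iii) if iterate convergence is required, telescope $\delta$ along the trajectory.

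For stage~(i), in the non-adaptive case one has the exact chain $\ell(P_n,Q_{n-1})\ge \ell(P_n,Q_n)\ge \ell(P_{n+1},Q_n)$, since $Q_{n-1}\in\mathcal Q_n$ and $P_n\in\mathcal P_{n+1}$ make $Q_n$ and $P_{n+1}$ admissible minimizers. Under adaptivity these memberships can fail, so I would replace $Q_{n-1}$ by a nearest point $Q_{n-1}'\in\mathcal Q_n$ with $d(Q_{n-1}',Q_{n-1})\le d_H(\mathcal Q_{n-1},\mathcal Q_n)$, and likewise for $P_n$. Because $\mathcal M$ is compact, $\ell$ is \emph{uniformly} continuous, so each substitution costs at most $\omega\big(d_H(\mathcal Q_{n-1},\mathcal Q_n)\big)$ for a modulus $\omega$, and these costs vanish as $d_H(\mathcal Q_n,\mathcal Q)\to0$. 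The value sequence is thus monotone up to a null perturbation and, being bounded (a continuous function on a compact set), converges to some limit $\ell^\infty$.

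For stage~(ii) I would argue by contradiction, assuming $\ell^\infty>\ell^\star$ and fixing $(\bar P,\bar Q)\in\mathcal P\times\mathcal Q$ with $\ell(\bar P,\bar Q)<\ell^\infty$. Pulling back through the Hausdorff convergence yields $\bar P_n\in\mathcal P_n$, $\bar Q_n\in\mathcal Q_n$ with $\bar P_n\to\bar P$ and $\bar Q_n\to\bar Q$. Applying~(a) with $P=\bar P_n$ gives $\delta(\bar P_n,P_n)\le \ell(\bar P_n,Q_{n-1})-\ell(P_n,Q_{n-1})$, and applying~(b) with $P=\bar P_n$, $Q=\bar Q_n$ gives $\ell(\bar P_n,Q_n)\le \ell(\bar P_n,\bar Q_n)+\delta(\bar P_n,P_n)$. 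Chaining the two, using $\ell(P_n,Q_{n-1})\ge\ell^\infty$, and writing $a_n:=\ell(\bar P,Q_n)$, continuity converts this into $a_n\le a_{n-1}-\gamma+\varepsilon_n$, where $\gamma:=\ell^\infty-\ell(\bar P,\bar Q)>0$ and $\varepsilon_n\to0$ collects the Hausdorff/continuity slack (the nonnegative term $\ell(P_n,Q_{n-1})-\ell^\infty$ is simply dropped, as it only sharpens the bound). For $n$ large enough the net per-step decrease is at least $\gamma/2$, forcing $a_n\to-\infty$ and contradicting boundedness of $\ell$. Hence $\ell^\infty=\ell^\star$, which is the asserted convergence.

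The main obstacle is the bookkeeping of the adaptivity error: the scheme works only if the drift terms $\varepsilon_n$ genuinely decay and never overwhelm the \emph{fixed} gap $\gamma$, and the lever that makes this true is compactness of $\mathcal M$ (giving uniform continuity of $\ell$ and attainment of the minimizers defining $P_n,Q_n$). If the intended conclusion is convergence of the \emph{iterates} rather than of the values, I would additionally feed a proxy in $\mathcal P_n$ for $P_{n-1}$ into~(a) to obtain $\delta(P_{n-1},P_n)\le \ell(P_{n-1},Q_{n-1})-\ell(P_n,Q_{n-1})$ up to a vanishing error; the right-hand side telescopes against the convergent value sequence, so $\sum_n\delta(P_{n-1},P_n)<\infty$ and hence $\delta(P_{n-1},P_n)\to0$. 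Provided $\delta$ dominates the metric $d$, this makes $\{P_n\}$ (and symmetrically $\{Q_n\}$) Cauchy, and compactness together with continuity of $\ell$ and the limiting forms of~(a),(b) pins the unique limit down as a fixed point of the alternating map. This last metric-domination requirement is the delicate point on which iterate convergence hinges.
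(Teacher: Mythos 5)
You should first note a structural point: the paper never proves this lemma at all --- it is imported verbatim as Theorem~4 of the cited work of Niesen, Shah and Wornell, so the only meaningful comparison is with that original proof. Your reconstruction is in the right family: reading (a) and (b) as the Csisz\'ar--Tusn\'ady three- and four-point properties, chaining them (your stage~(ii) inequality $\ell(\bar P_n,Q_n)\le \ell(\bar P_n,\bar Q_n)+\ell(\bar P_n,Q_{n-1})-\ell(P_n,Q_{n-1})$ is exactly the five-point property), handling the set drift by proxy points supplied by Hausdorff convergence, and exploiting compactness for uniform continuity and attainment --- all of this matches the original argument, as does your correct reading of the vaguely stated conclusion as convergence of the \emph{objective values}, with iterate convergence available only when $\delta$ dominates the metric.

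There is, however, a genuine gap at stage~(i), and stage~(ii) inherits it. A bounded sequence satisfying $v_{n+1}\le v_n+\varepsilon_n$ with $\varepsilon_n\to 0$ need \emph{not} converge: with nonsummable perturbations (say $\varepsilon_n=n^{-1/2}$) the sequence can descend freely and climb back in many small steps, oscillating between $0$ and $1$ forever; convergence requires $\sum_n\varepsilon_n<\infty$, and Hausdorff convergence $\mathcal P_n\xrightarrow{d_H}\mathcal P$, $\mathcal Q_n\xrightarrow{d_H}\mathcal Q$ gives only $d_H(\mathcal Q_{n-1},\mathcal Q_n)\to 0$, not summability. Consequently the limit $\ell^\infty$ on which your contradiction leans --- both as the assumed limit and as the eventual lower bound $\ell(P_n,Q_{n-1})\ge \ell^\infty-o(1)$ --- is not available. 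The repair is to dispense with stage~(i): either rerun your own contradiction with $L:=\liminf_n \ell(P_n,Q_{n-1})$ in place of $\ell^\infty$, or telescope the five-point inequality, $\ell(\bar P,Q_n)\le \ell(\bar P,Q_{n-1})+\ell(\bar P,\bar Q)-\ell(P_n,Q_{n-1})+\varepsilon_n$, and pass to Ces\`aro averages; since $\ell(P_n,Q_{n-1})\ge \min_{\mathcal P_n\times\mathcal Q_{n-1}}\ell\to\ell^\star$ by compactness and Hausdorff convergence, boundedness of $\ell(\bar P,Q_n)$ forces $\liminf_n\ell(P_n,Q_{n-1})=\ell^\star$. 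Upgrading this $\liminf$ to a full limit is precisely the nontrivial bookkeeping carried out in the cited proof and does not follow from your almost-monotonicity claim. A smaller caveat on your final remark: if $\delta$ only dominates the \emph{squared} metric (the typical divergence bound), then $\sum_n\delta(P_{n-1},P_n)<\infty$ yields square-summable steps, which is not enough for Cauchy; your conclusion needs $\delta\ge c\,d$, exactly the delicate hypothesis you flagged.
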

Lemma \ref{lemma:cited} can be applied to our setting in the following sense. 
Let
$
q = (I- UU^\intercal)x
$
and
$
P = cc^\intercal,
$
we have that our optimization problem
\begin{equation}
\begin{array}{rl}
\underset{U, c}{\rm minimize}\ & (c^\intercal (I-UU^\intercal )x)^2\\
{\rm subject\ to\ } & U^\intercal U = I_d\\
& y_t = c^\intercal x_t,
\end{array}
\end{equation}
can be recast into
\begin{equation}
\begin{array}{rl}
\underset{U, c}{\rm minimize}\ & \ell(P,q) \triangleq q^\intercal P q\\
{\rm subject\ to\ } & P\in\mathcal P_t,\\
& q\in \mathcal Q_t,
\end{array}
\end{equation}
where
\[
\mathcal Q_t = \{ z\in \mathbb R^D: \exists V\in \mathbb R^{D\times d}, V^\intercal V = I_d, z = (I-VV^\intercal)x_t\},
\]
and 
\[
\mathcal P_t = \{Z\in \mathbb R^{D\times D}: \exists v\in \mathbb R^D, y_t = v^\intercal x_t, Z = vv^\intercal \}.
\]
For simplicity, denote $P_n, \tilde{P}_n$ and $q_n, \tilde{q}_n$ be arbitrary items from $\mathcal P_n$ and $\mathcal Q_n$, respectively, and 
\[
P_{q, n} \triangleq \arg \min_{P\in \mathcal P_n} \ell (P, q),
\]
and 
\[
q_{P, n} \triangleq \arg \min_{q\in {\mathcal Q}_n} \ell(P, q).
\]
Suppose we can choose the data such that $\lim_{t\to \infty} x_t = \mu$ and $y_t$ should also converge to some $\nu$, then there exists $\mathcal P$, $\mathcal Q$ such that 
$\mathcal P_t\xrightarrow{d_H} \mathcal P$ and $\mathcal Q_t \xrightarrow{d_H} \mathcal Q$. 
In order to be able to choose $\delta(\cdot,\cdot )$ such that
\[
\delta(P_n, \tilde{P}_n)\geq \ell(P_n, q_{\{\tilde{P}_n, n\}}) - \ell (P_n, q_n),
\]
\[
\delta(P_n, P_{\{\tilde{q}_{n-1}, n\} }) \leq \ell(P_n, \tilde{q}_{n-1})-\ell(P_{\{\tilde{q}_{n-1}, n\}}, \tilde{q}_{n-1}), 
\]
we will need to have
\begin{align*}
&\ell(P_n, \tilde{q}_{n-1})-\ell(P_{\{\tilde{q}_{n-1}, n\}}, \tilde{q}_{n-1}) \\
\geq  
&\ell(P_n, q_{\{P_{\{\tilde{q}_{n-1}, n\} }, n\}}) - \ell (P_n, q_n),
\end{align*}
then we will need
\begin{equation}\label{delta-inequality}
\begin{array}{rl}
 &\ell(P_n, \tilde{q}_{n-1})-\ell(P_{\{\tilde{q}_{n-1}, n\}}, \tilde{q}_{n-1}) \\
\geq  
&\ell(P_n, q_{\{P_{\{\tilde{q}_{n-1}, n\} }, n\}}) - \ell (P_n, q_{\{P_n, n\}}).
\end{array}
\end{equation}
If the input sequence $\{x_t\}$ is properly selected such that for any $P_n\in \mathcal P_n$ and $q_{n-1}\in \mathcal Q_{n-1}$, the inequality (\ref{delta-inequality}) holds, then with Lemma \ref{lemma:cited}, we have the adaptive alternating algorithm for the linear regression problem converges.

\subsection{$\varepsilon$-net}

Another technique we may explore to tackle the non-convex optimization problem in (\ref{opt:blockcnvx}) is the efficient discretization. The idea is that instead of using alternating minimization, for the non-convex optimization problem involved, we can find a sufficiently fine yet efficient discretization (as function of the
desired error guarantee) that allows us to replace a single non-convex
optimization problem by a polynomial number of convex problems.
This will not lead to practically efficient algorithms as
everything beyond quadratic or cubic running time in the size is
usually prohibitive. However, it will allow us to
 establish general, theoretical guarantees and guide the
 search for better practical algorithms.
In particular, we can adopt approaches similar to the
\(\varepsilon\)-net approach in \cite{epsilon_net_2014}. This provides a discrete set $S$ of size
$|S| \leq (1/\varepsilon)^d$ so that for all $y \in \mathbb R^n$, there exists
a point $y_0 \in S$ such that $\|Ay - y_0\|_\infty <
\varepsilon$. This approximation now allows us to handle bilinear terms
of the form \(x\transpose A y\), which are non-convex, by replacing
them with the two terms \(x\transpose y_0\) and $\|Ay - y_0\|_\infty <
\varepsilon$ and iterating over all possible choices \(y_0 \in S\).
%
%
With the help of the following lemma, we may establish our result.
\begin{lemma}[$\varepsilon$-net for positive semidefinite matrix \cite{epsilon_net_2014}]\label{lemma:epsilon-net}
Let $A=BB^\intercal$, where $A$ is an $D\times D$ positive semidefinite matrix with entries in $[-1, 1]$ and $B$ is $D\times d$. Let $\Delta = \Delta_n =\{ x\in \mathbb R^D, \Vert x\Vert_1=1, x\geq 0 \}.$ There is a finite set $\mathcal S\in \mathbb R^d$ independent of $A, B$ such that
\[
\forall x\in \Delta, \exists \tilde{x} \in \mathcal S {\ \rm s.t.\ } \Vert B^\intercal x-\tilde{x} \Vert_{\infty}\leq \frac{\varepsilon}{d}
\]
with $\vert \mathcal S\vert=\mathcal O((1/\epsilon)^d)$. Moreover, $\mathcal S$ can be computed in time $\mathcal O ((1/\varepsilon)^d {\rm poly} (D)$).
\end{lemma}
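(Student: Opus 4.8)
The plan is to reduce the statement to the elementary fact that a fixed bounded subset of $\mathbb{R}^d$ admits a small $\ell_\infty$-net, after first showing that the image $\{B\transpose x : x\in\Delta\}$ lands in such a subset no matter what $B$ is. The structural input I would exploit is the hypothesis $A=BB\transpose$ with entries in $[-1,1]$. Writing $b_1,\dots,b_D\in\mathbb{R}^d$ for the rows of $B$, the diagonal entries of $A$ satisfy $A_{ii}=\|b_i\|_2^2\le 1$, so every row obeys $\|b_i\|_2\le 1$. For any $x=(x_1,\dots,x_D)\in\Delta$ we have $x_i\ge 0$ and $\sum_i x_i=1$, hence
\[
B\transpose x=\sum_{i=1}^{D} x_i\, b_i
\]
is a convex combination of points of the Euclidean unit ball $\mathbb{B}\triangleq\{z\in\mathbb{R}^d:\|z\|_2\le 1\}$. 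Since $\mathbb{B}$ is convex, $B\transpose x\in\mathbb{B}$ for every $x\in\Delta$; the whole image therefore lies in the single compact set $\mathbb{B}$, which is independent of $A$ and $B$.

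Next I would build $\mathcal{S}$ as a uniform grid net of the enclosing cube $[-1,1]^d\supseteq\mathbb{B}$. Taking a rectangular lattice with mesh $2\varepsilon/d$ in each coordinate, every point of $[-1,1]^d$ lies within $\ell_\infty$-distance $\varepsilon/d$ of a lattice point. In particular, for each $x\in\Delta$ there is some $\tilde{x}\in\mathcal{S}$ with $\|B\transpose x-\tilde{x}\|_\infty\le\varepsilon/d$, which is exactly the covering claim. The number of lattice points along each axis is $\mathcal{O}(d/\varepsilon)$, so
\[
|\mathcal{S}|=\mathcal{O}\big((d/\varepsilon)^d\big)=\mathcal{O}\big((1/\varepsilon)^d\big)
\]
for fixed reduced dimension $d$. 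Crucially, this construction refers only to $\varepsilon$ and $d$ and never to $A$ or $B$, which gives the required independence. Enumerating the grid costs $\mathcal{O}((1/\varepsilon)^d)$, and any coordinate bookkeeping involving the $D\times d$ matrix $B$ contributes a factor $\mathrm{poly}(D)$, yielding the stated running time $\mathcal{O}((1/\varepsilon)^d\,\mathrm{poly}(D))$.

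The heart of the argument is the first step: recognizing that the factorization $A=BB\transpose$ together with the entrywise bound forces the rows of $B$ into the unit ball, which is what confines the simplex image to one fixed, $A,B$-independent compact set. Without this normalization the image could be unbounded and no finite net would exist, so I would present that observation cleanly before anything else; the grid construction and the cardinality and complexity counts are then standard covering-number arguments. The only subtlety to flag is the treatment of the hidden constants: since $(d/\varepsilon)^d=d^d(1/\varepsilon)^d$, the bound $\mathcal{O}((1/\varepsilon)^d)$ must be read with $d$ held fixed as the target reduced dimension, consistent with the discretization regime of this section.
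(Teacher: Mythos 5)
The paper never proves this lemma---it is imported verbatim from \cite{epsilon_net_2014}---so there is no in-paper argument to compare against; your proof is correct and is the standard argument behind the cited result: the diagonal bound $A_{ii}=\Vert b_i\Vert_2^2\leq 1$ forces every row of $B$ into the Euclidean unit ball, convexity of the ball confines $B\transpose x$ for all $x\in\Delta$ to a single compact set independent of $A$ and $B$, and the mesh-$2\varepsilon/d$ grid on $[-1,1]^d$ delivers the covering guarantee, the cardinality bound, and the enumeration time. Your closing caveat is also the right one to flag: the grid has $\mathcal{O}((d/\varepsilon)^d)=d^d\cdot\mathcal{O}((1/\varepsilon)^d)$ points, so the stated $\mathcal{O}((1/\varepsilon)^d)$ must be read with the reduced dimension $d$ held fixed (a $d$-dependent constant absorbed into the $\mathcal{O}$), which is exactly the regime in which the paper applies the lemma.
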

Assume $\Vert x \Vert_1=\alpha>0$. From lemma \ref{lemma:epsilon-net}, we can compute such a set $\mathcal S$ in time $\mathcal O ((\alpha/\varepsilon)^d {\rm poly} (D))$ such that $\exists \tilde{x}\in \mathcal S$ and $\Vert U^\intercal x-\tilde{x}\Vert_\infty \leq {\varepsilon}/{d}$.
Let $\mathcal S=\{ \tilde{x}_1,\ldots. \tilde{x}_{\vert \mathcal S \vert} \}$ where $\vert \mathcal S \vert=\mathcal O((\alpha/\varepsilon)^d)$. 
We can then approximate a related problem to (\ref{opt:block}) by a family of $\vert\mathcal  S\vert$ problems:
\begin{equation}\label{opt:epsilon-net}
\min_{U}\  \mathbb E\Vert C_0^\intercal (x - U\tilde{x}_i )\Vert_2^2, \quad i=1,\ldots, \vert\mathcal S\vert.
\end{equation}
By combining all sub problems in (\ref{opt:epsilon-net}) together, we have the following equivalent problem
\begin{equation}\label{opt:epsilon-net-minimax}
\min_{U}\ \mathbb E\Vert C_0^\intercal (X_{\mathcal S} - U\tilde{X}_{\mathcal S} )\Vert_F^2,
\end{equation}
where \[X_{\mathcal S} = \underbrace{\begin{bmatrix}x & \cdots & x\end{bmatrix}}_{\vert \mathcal S\vert},\] and $\tilde{X}_{\mathcal S} = \begin{bmatrix}\tilde{x}_1 & \ldots & \tilde{x}_{\vert \mathcal S \vert }\end{bmatrix}$.
Note that the set $\mathcal S$ depends on $x$, and by construction $C_0$ is orthogonal, 
we approximate (\ref{opt:epsilon-net-minimax}) by
\begin{equation}\label{opt:epsilon-net-approx}
\min_{U} \Vert C_0^\intercal (X_{\mathcal S} - U\tilde{X}_{\mathcal S} )\Vert_F^2 = \min_U \Vert X_{\mathcal S} - U\tilde{X}_{\mathcal S} \Vert_F^2,
\end{equation}
which is convex.
\begin{lemma}
An $\varepsilon$-net approximation to (\ref{opt:blockcnvx}) can be computed in polynomial time
$\mathcal O ((\alpha/\varepsilon)^{3d}\cdot {\rm poly} (Dd)$.
\end{lemma}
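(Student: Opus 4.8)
The plan is to bound the total running time as the sum of the net-construction cost and the cost of solving the resulting convex program, and separately to verify that this convex program is a legitimate $\varepsilon$-accurate surrogate for (\ref{opt:blockcnvx}). The construction in the preceding paragraphs already sets up the three reductions (\ref{opt:epsilon-net})$\to$(\ref{opt:epsilon-net-minimax})$\to$(\ref{opt:epsilon-net-approx}); what remains is to charge each reduction a running time and to add them up.

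First I would invoke Lemma \ref{lemma:epsilon-net} verbatim: with $\|x\|_1=\alpha$ it produces a set $\mathcal S$ with $|\mathcal S|=\mathcal O((\alpha/\varepsilon)^d)$ such that some $\tilde x\in\mathcal S$ satisfies $\|U\transpose x-\tilde x\|_\infty\leq\varepsilon/d$, and this set is computed in time $\mathcal O((\alpha/\varepsilon)^d\,{\rm poly}(D))$. This is the step that discretizes the offending coupling $U\transpose x$ hidden inside $(I-UU\transpose)x = x - U(U\transpose x)$, replacing it by a fixed net point and thereby removing the non-convexity. Next I would pass to the stacked problem (\ref{opt:epsilon-net-minimax}) and to its convex surrogate (\ref{opt:epsilon-net-approx}), using that $C_0$ is orthogonal so that $\|C_0\transpose(\cdot)\|_F=\|\cdot\|_F$. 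The surrogate $\min_U\|X_{\mathcal S}-U\tilde X_{\mathcal S}\|_F^2$ is an ordinary least-squares problem in $U$ with closed-form minimizer $U = X_{\mathcal S}\tilde X_{\mathcal S}\transpose(\tilde X_{\mathcal S}\tilde X_{\mathcal S}\transpose)^{-1}$.

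The key running-time observation is that $X_{\mathcal S}$ and $\tilde X_{\mathcal S}$ each carry $|\mathcal S|$ columns, so forming and solving the associated normal equations (equivalently, computing an SVD whose governing dimension scales with $|\mathcal S|$) costs $\mathcal O(|\mathcal S|^3\cdot{\rm poly}(Dd))$. Substituting $|\mathcal S|=\mathcal O((\alpha/\varepsilon)^d)$ yields $\mathcal O((\alpha/\varepsilon)^{3d}\cdot{\rm poly}(Dd))$, which dominates the net-construction cost $\mathcal O((\alpha/\varepsilon)^d\,{\rm poly}(D))$; summing the two therefore gives the stated bound. This accounts for the exponent $3d$ as the cubic cost of the convex solve on net-sized data.

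The hard part will not be the running-time bookkeeping but the \emph{approximation guarantee}: one must show that the optimum of the convex surrogate (\ref{opt:epsilon-net-approx}) is within $\mathcal O(\varepsilon)$ of the optimum of the original non-convex problem (\ref{opt:blockcnvx}). This requires propagating the $\ell_\infty$ net error $\varepsilon/d$ through the quadratic objective, i.e. bounding $\bigl|\,\|C_0\transpose(I-UU\transpose)x\|_2^2-\|C_0\transpose(x-U\tilde x)\|_2^2\,\bigr|$ uniformly over feasible $U$, and checking that replacing the expectation in (\ref{opt:epsilon-net-minimax}) by the stacked empirical form in (\ref{opt:epsilon-net-approx}) does not degrade this guarantee. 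Controlling this error uniformly while keeping $|\mathcal S|$ at $(\alpha/\varepsilon)^d$ is the delicate point; the orthogonality of $C_0$ is precisely what prevents the error from being amplified by the $C_0\transpose$ factor, and is therefore essential to closing the argument.
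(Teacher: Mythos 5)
Your proposal is correct and follows essentially the same route as the paper: both charge $\mathcal O((\alpha/\varepsilon)^d\,{\rm poly}(D))$ for constructing $\mathcal S$ via Lemma~\ref{lemma:epsilon-net} plus a cubic-in-$\vert\mathcal S\vert$ cost for the convex solve (the paper invokes an iterative solver from \cite{PlanThesis2011} with per-iteration cost $\mathcal O(D^2\vert\mathcal S\vert+\vert\mathcal S\vert^3)$ where you use closed-form normal equations, an immaterial difference), so the $\vert\mathcal S\vert^3=\mathcal O((\alpha/\varepsilon)^{3d})$ term dominates and yields the stated bound. The approximation-guarantee issue you flag as the hard part lies outside the lemma's claim, which asserts only a running time, and the paper's proof likewise does not address it.
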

\begin{proof}
The complexity of this epsilon-net method is the time for computing $\mathcal S$ plus the time for computing the optimization group \ref{opt:epsilon-net} generated by epsilon-net. From lemma \ref{lemma:epsilon-net}, computing $\mathcal S$ takes time  $\mathcal O ((\alpha/\varepsilon)^d {\rm poly} (D))$ and the size of the set is $\vert\mathcal S\vert=\mathcal O((\alpha/\varepsilon)^d)$. The dimension of the matrix in this problem is $D\times \vert \mathcal S\vert$. With the result in \cite{PlanThesis2011}, we may use an iterative iterative algorithm to solve the problem and the computational complexity in each iteration is $\mathcal O (D^2\vert \mathcal S\vert +\vert\mathcal S\vert ^3)$. This leads to a polynomial time algorithm with complexity $\mathcal O({\rm poly} (Dd))$.  Hence, the total time needed for generating and solving the $\varepsilon$-net approximation problems is
\begin{align*}
&\mathcal O ((\alpha/\varepsilon)^d {\rm poly} (D)+ \mathcal O((\alpha/\varepsilon)^{3d}) \cdot \mathcal O({\rm poly}(Dd))\\
=& \mathcal O ((\alpha/\varepsilon)^{3d}\cdot {\rm poly} (Dd).
\end{align*}
\end{proof}

\section{Numerical examples}\label{sec:num_eg}

We demonstrate the performance of OSDR compared with the online dimensionality reduction (ODR) (learning a subspace online by minimizing $\|x - U\beta\|$ without using information of $y$) via numerical examples on simulated data and real data. We start with a simple numerical example, followed by a larger scale simulation, an example to illustrate the union-of-subspaces and random dot product model, and finally real-data experiments. 

\subsection{Simple subspace tracking}

\noindent{\it Static subspace, logistic regression.} We first generate data by embedding a \emph{static} low-dimensional space of  intrinsic dimension $d^* = 2$ into a high dimensional space with dimension $D$, and generating a sequence of $\beta_t$ vector such that the entries of $\beta_t$ are i.i.d. $\mathcal{N}(0, 1)$ and lies within an ellipse:
\begin{equation}
\beta_{t,1}^2/r_1^2 + \beta_{t,2}^2/r_2^2 \leq 1. \label{ellipse}
\end{equation}
The predictor vector $x_t \in \mathbb{R}^p$ is formed as
$x_t = U\beta_t + w_t,$ where $w_t$ is a Gaussian noise with zero mean and variance equal to $10^{-3}$. 
The logistic regression coefficient vector is along the shorter axis of the ellipse. Among the 6000 samples generated this way,  the first 3000 samples are for training, and the remaining 3000 samples are for testing. Given $x_t$, ODR or OSDR predict the label $\hat{y}_t$, then we reveal the true label to calculate error, and then use $(x_t, y_t)$ to update. We use misclassification error $P_e$ on the test data as our performance metric. 
In Fig. \ref{Fig:staticII}, OSDR outperforms ODR by an almost two order of magnitude smaller error.

\begin{figure}[ht]
\centering
\subfigure[]{
        \includegraphics[width = 0.22\textwidth]{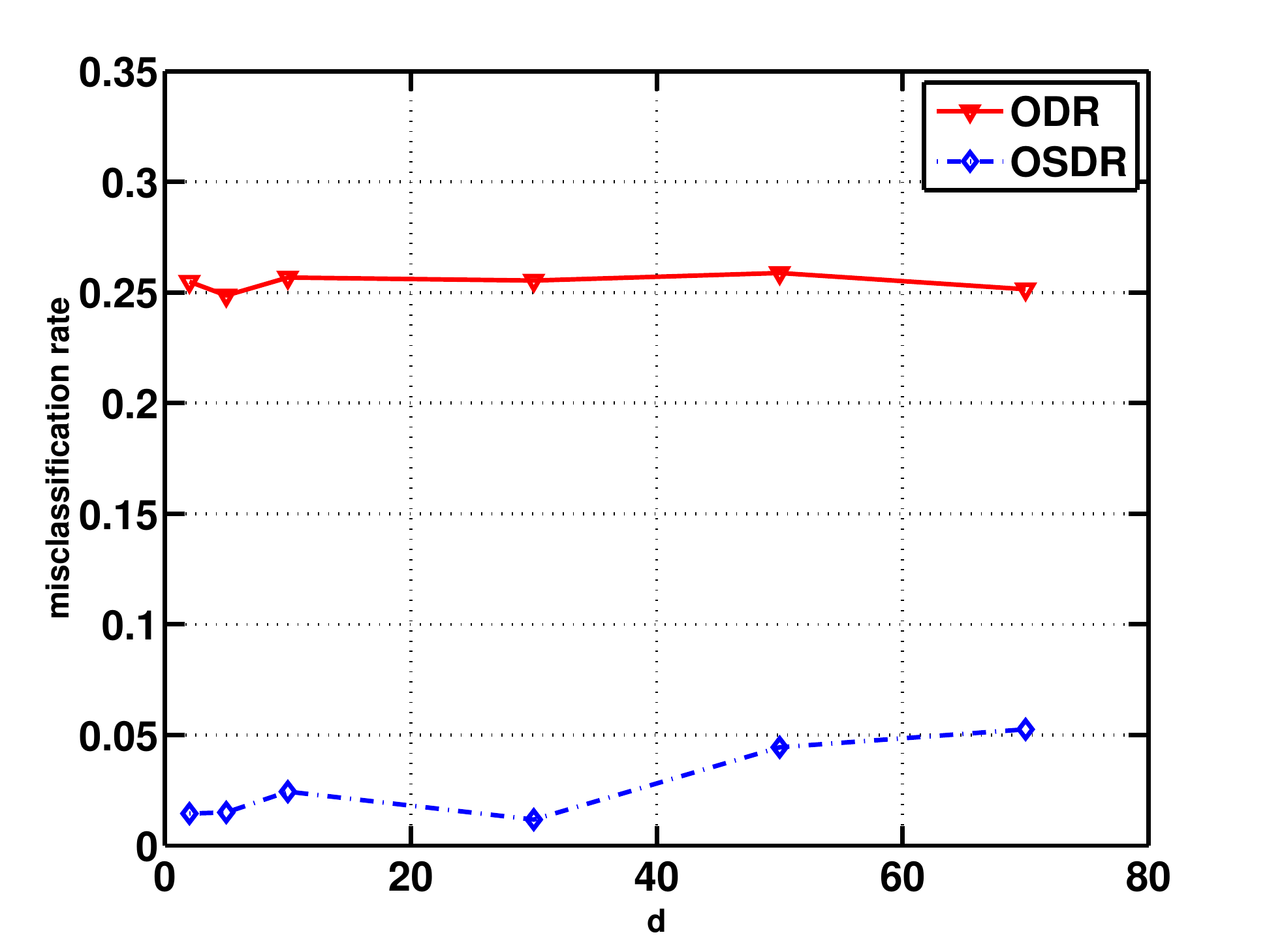} }
\subfigure[]{
        \includegraphics[width = 0.22\textwidth]{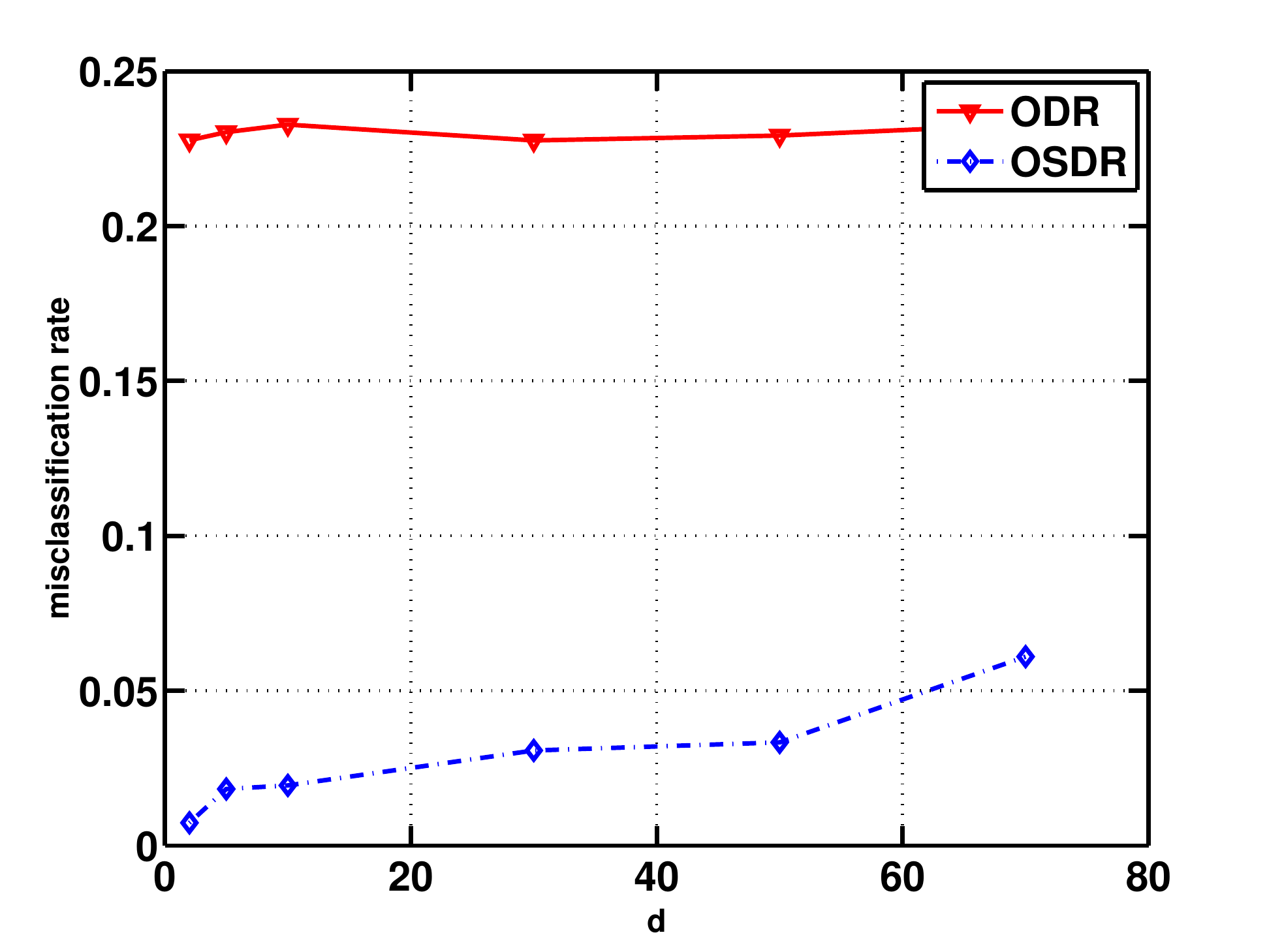} }
\caption{Tracking a static subspace with $D = 100$ and true intrinsic dimension $d^* = 2$. We plot the dimension of the subspace $d$ versus misclassification errors for different aspect ratio of the ellipse in (\ref{ellipse}) 
(a) $r_1/r_2= 3$; (b) $r_1/r_2 = 5$; 
}\label{Fig:staticII}
\end{figure}

\vspace{0.1in}
\noindent{\it Rotating subspaces, logistic regression.} Next we consider tracking a time-varying subspace to demonstrate the capability of OSDR to handle dynamic data. Assume $U_{t} = U_0 R(t)$  with the rotation matrix given by
\begin{equation}
R(t) = \begin{bmatrix} \cos(\alpha_t) & -\sin(\alpha_t) \\
\sin(\alpha_t) & \cos(\alpha_t)\\
\end{bmatrix},
\end{equation}
where $\alpha_t$ is the rotation angle, and  $U_0 \in \mathbb{R}^{p\times 2}$ is a random initial subspace. The vector $\beta_t$ is again generated with entries i.i.d. and lies within an ellipse described by (\ref{ellipse}).
The predictors $x_t$ is generated as the last example. 
%
The rotation angle $\alpha_t$ follows
\begin{equation}
    \alpha_{t}=
   \begin{cases}
   0, &\mbox{if $t \leq$ 500};\\
   \frac{2\pi}{\tau} \cdot \frac{t - 500}{6000 - 500}, & \mbox{if $500 < t \leq 6000$};
   \end{cases}
  \label{eqn:rotationRate}
\end{equation}
where $\tau$ is the rotation speed (smaller $\tau$ corresponds to faster rotation).
The logistic regression coefficient vector is along the shorter axis of the ellipse.  
Fig. \ref{Fig:rotat1} shows $P_e$ for various ration speed, where, again OSDR significantly outperforms ODR.

\begin{figure}[ht]
\centering
\subfigure[]{
        \includegraphics[width = 0.22\textwidth]{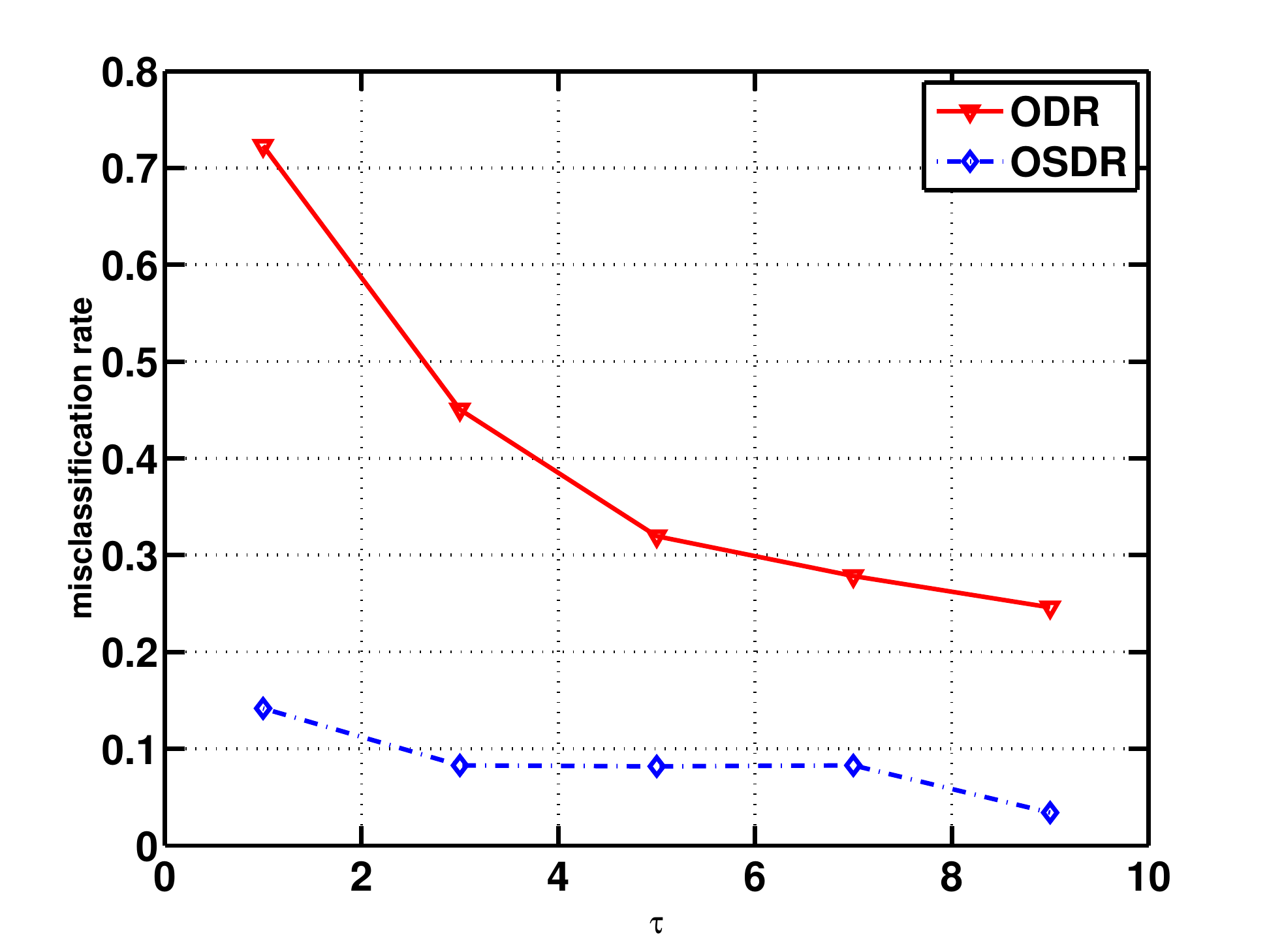} }
\subfigure[]{
        \includegraphics[width = 0.22\textwidth]{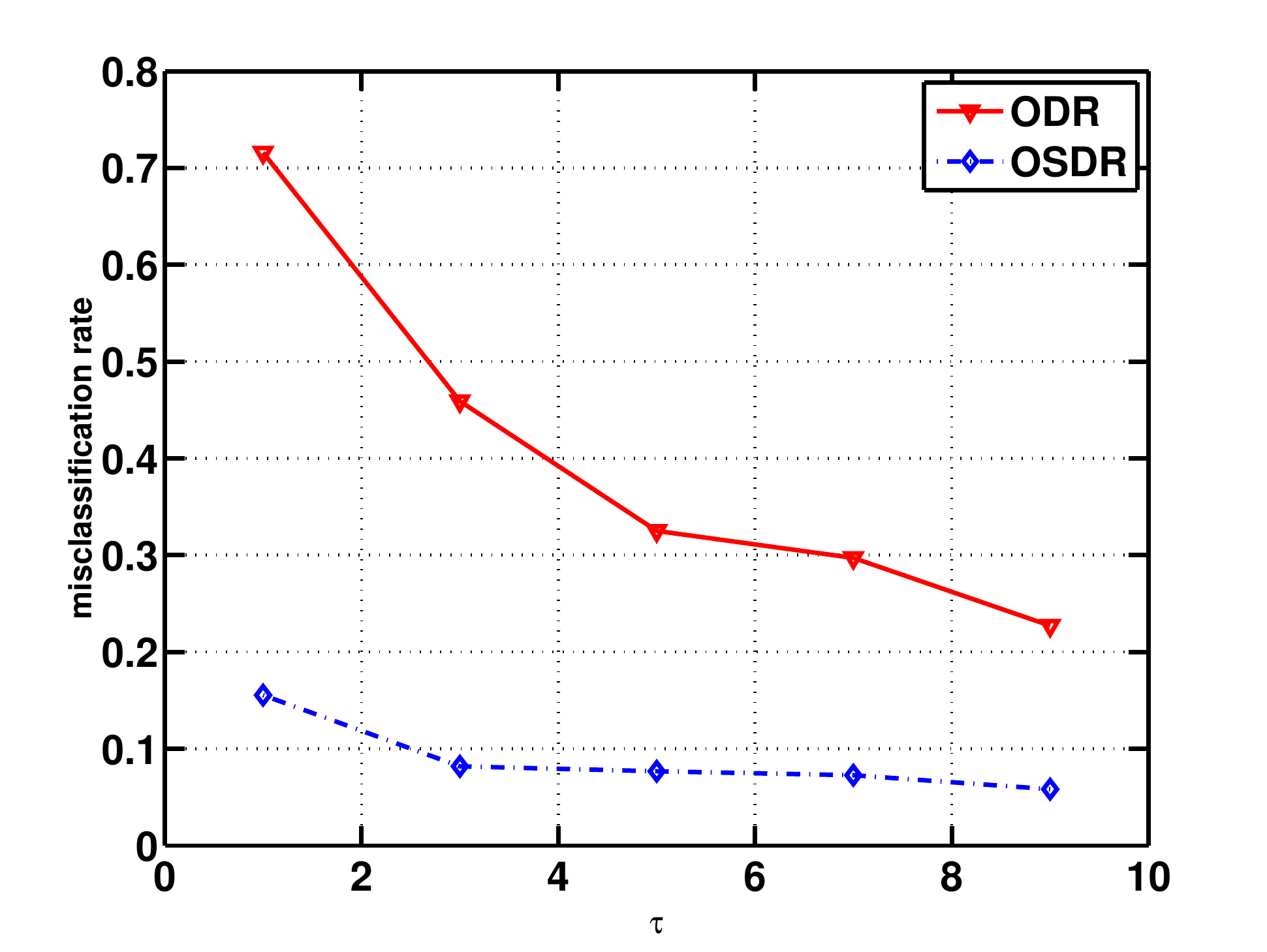} }

\subfigure[]{
        \includegraphics[width = 0.22\textwidth]{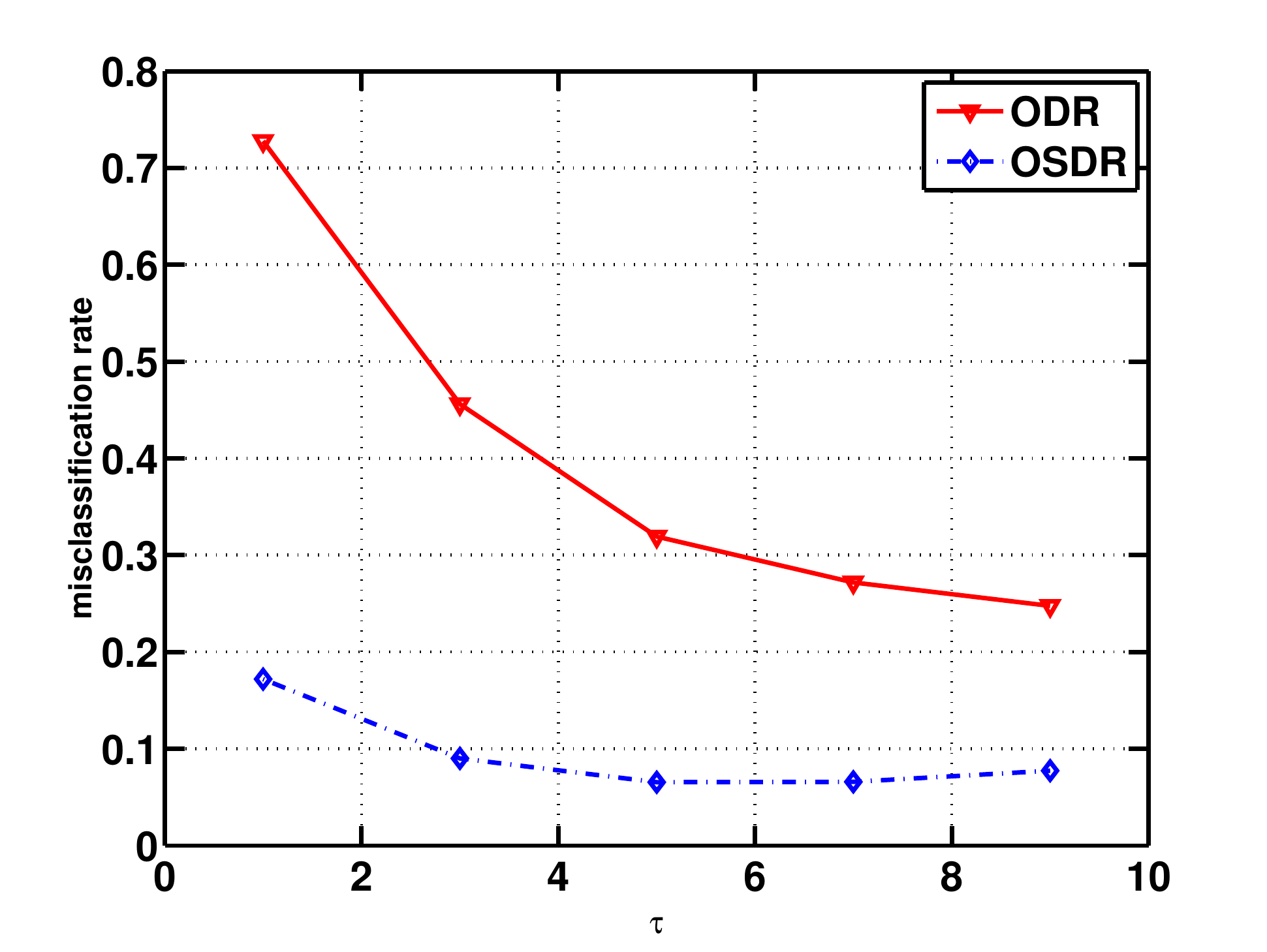} }
\subfigure[]{
        \includegraphics[width = 0.22\textwidth]{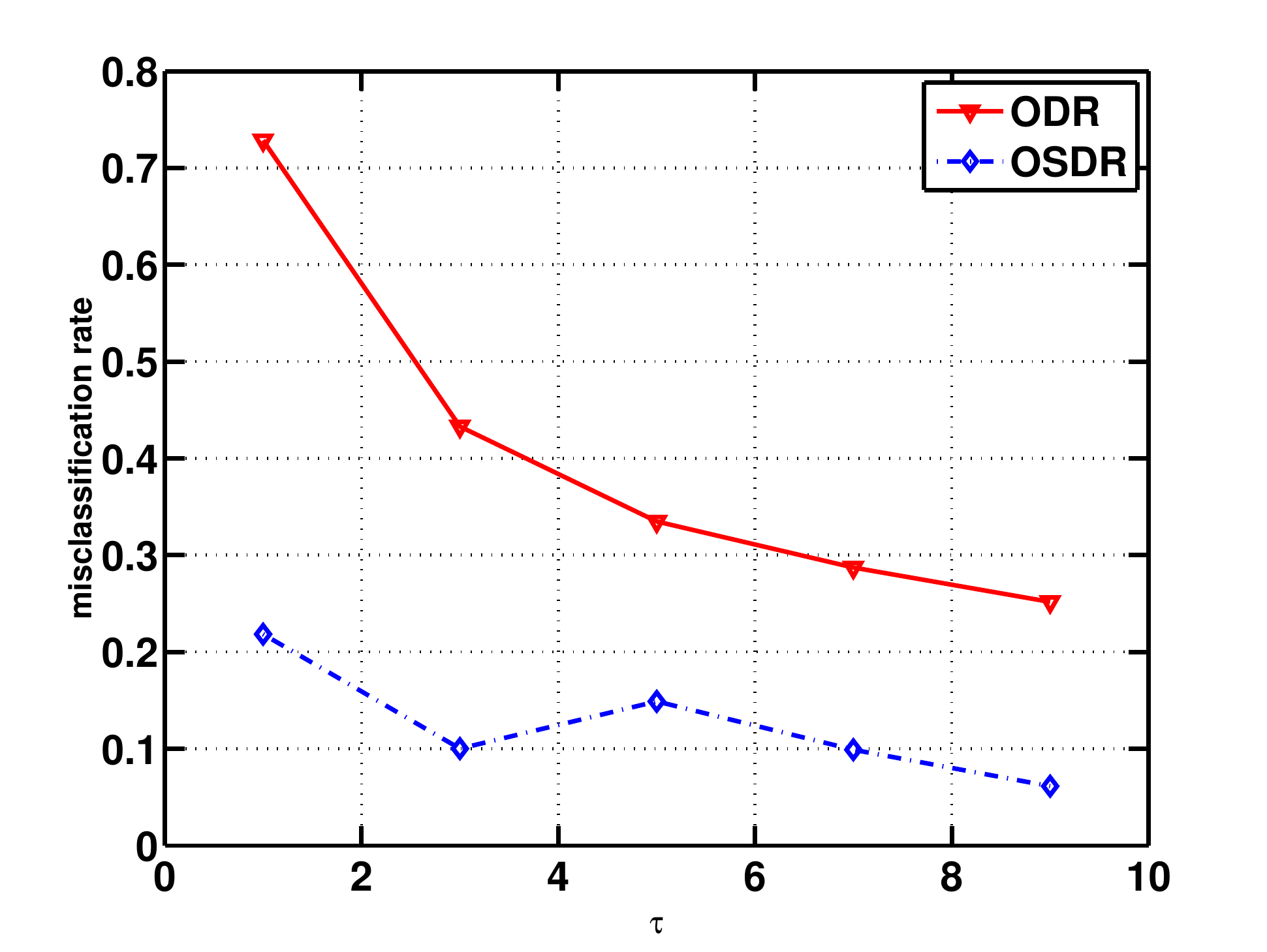} }
\caption{Rotating subspace with aspect ratio of the ellipse $r_1/r_2 = 10$. Rotation ratio $\tau$ versus misclassification rate when (a) $d$ = 30; (b) $d$ = 10; (c) $d$ = 5; (d) $d$ = 2.}
\label{Fig:rotat1}
\end{figure}

\vspace{0.1in}
\noindent{\it Static subspace, linear regression.} The third simple example compares the performance of OSDR with ODR in the linear regression setting. The setup is similar to that of tracking a static subspace, except that the response variable $y_t$ is generated through a linear regression model
$
y_t = c\transpose U \beta + b + \epsilon_t,
$
with $D = 2$, $d = 1$, $c=[c_1, c_2]^\top$ and $\epsilon_t \sim \mathcal{N}(0, \delta^2)$ with $\delta^2 =10^{-3}$.
We use the  rooted mean squared error (RMSE) on the test data as our performance metric, which is the square root of the averaged square error on the predicted $\hat{y}$ differs from true $y$.
Fig. \ref{Fig:linear} shows the RMSE associated with OSDR is significantly lower than that of ODR.

\begin{figure}[ht]
\centering
\subfigure[]{
        \includegraphics[width = 0.22\textwidth]{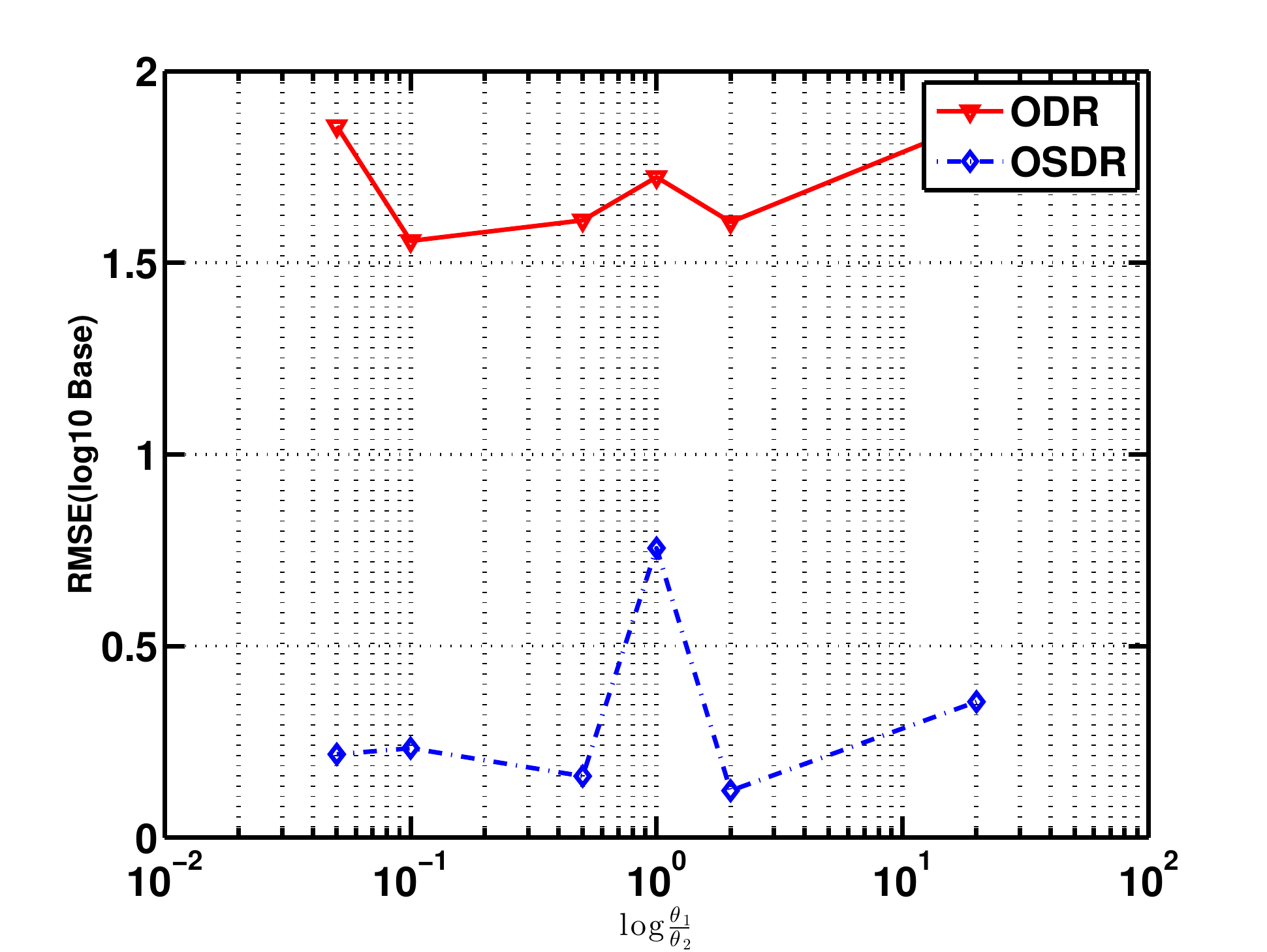} }
\subfigure[]{
        \includegraphics[width = 0.22\textwidth]{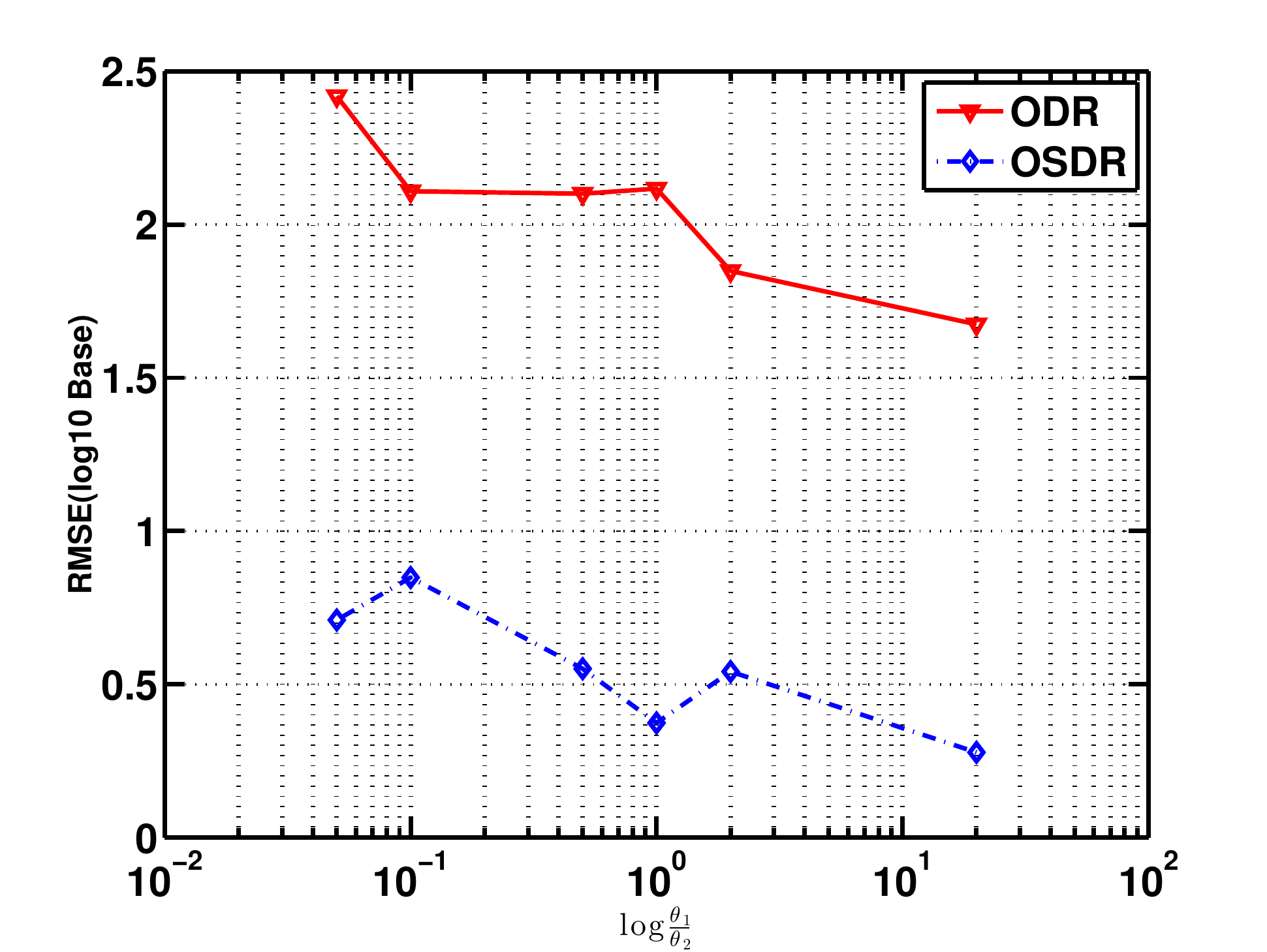} }
\caption{Static subspace, OSDR linear regression: RMSE versus  $\log(c_1 / c_2)$ versus $\log(\mbox{RMSE})$ when (a)  $r_1 / r_2 = 1$; (b) $r_ 1/ r_2 = 2$.}
\label{Fig:linear}
\vspace{-0.2in}
\end{figure}

\begin{figure}[ht]
\centering
\subfigure[Type I Results]{
        \includegraphics[width = 0.22\textwidth]{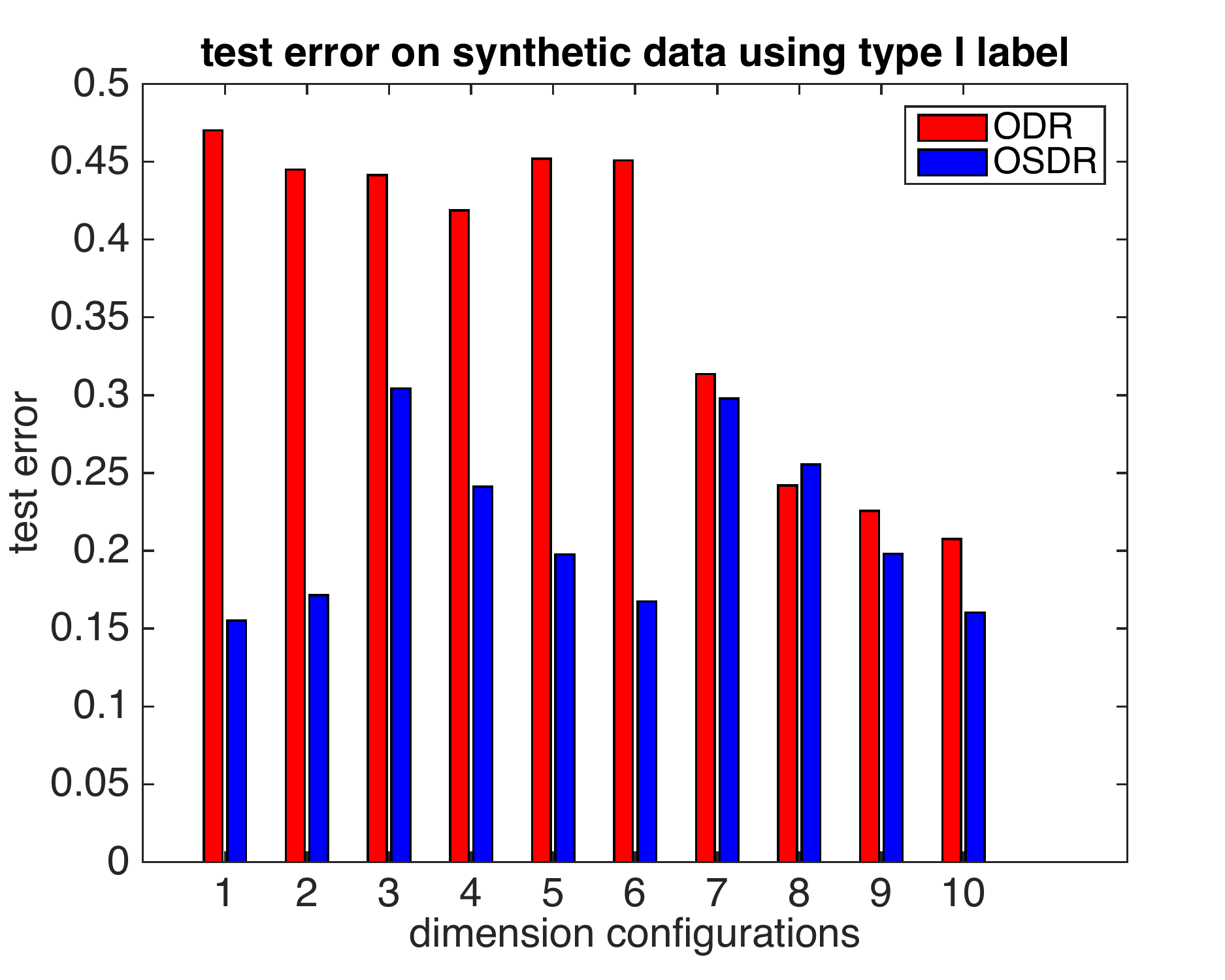} }
\subfigure[Type II Results]{
        \includegraphics[width = 0.22\textwidth]{bar_plot_type_I.pdf} }
\caption{Synthetic classification dataset. (a) test errors $P_e$ under type I; (b) test errors $P_e$ under type II. The ten $(D, d)$ pairs are (2, 1); (3, 2); (10, 2); (10, 4); (10, 6); (10, 8); (50, 10); (50, 20); (50, 30); (50, 40), and have x-labels from 1 to 10 in the figure, correspondingly. }
	\label{Fig:synthetic_results}
\vspace{-0.2in}
\end{figure}

%


%
%
%
%

\subsection{Synthetic classification dataset}

We consider a larger scale example by generating a synthetic dataset for two-class classification. The dataset has $N = 10000$ samples $\{x_n\} \in R^D, n = 1, \ldots, N$. This generating process guarantees that the eigenvalues of the covariance matrix for the dataset are picked from $D, D-1, \ldots, 1$ (This is done by first generate a random subspace, then project the random generated data into this subspace, and also scale the dimensions to have the required eigenvalues). 
An example when $D=3$ are shown in Fig.~\ref{fig:bad_idea}.

	The labels of the data are obtained as follows: Suppose the covariance matrix of data is $X$. We first find the eigenvalues $[\lambda_1, \lambda_2, \ldots, \lambda_D]$ and the corresponding eigenvectors $[v_1, v_2, \ldots, v_D]$ where $\lambda_1 \geq \lambda_2 \geq \ldots \geq \lambda_D$; then select an eigenvector $v_p, p \in [1, \ldots, D]$, and project the data onto this vector. After sorting the projected values, we label the first half as positive, and last half as negative. Consider two settings for selecting $p$: type-I: $p = d + 1$; type-II: $p = d + (D - d) / 2$. Clearly, type-II will be harder, since the corresponding variance of projected data will be smaller. 
	
	We use half of the data for training, and another half for testing. The tuned learning rate $\mu \in [10^{-2}, 10^{-3}, 10^{-4}]$ and $\eta \in [10^{-2}, 10^{-3}, 10^{-4}]$ for both ODR and OSDR. The mean accuracy $P_e$ are reported after 10 rounds of experiments for each setting. Besides different types of labelling directions, we also evaluated different combinations of $(D, d)$ pairs, as shown in the Fig.~\ref{Fig:synthetic_results}. The results show that  OSDR outperforms the SDR (baseline) significantly. This is expected, as the first $d$ leading directions of principle components contain nothing about the label information; therefore, the unsupervised ODR will fail almost surely. The simple example shown in Fig.~\ref{fig:bad_idea} proves that OSDR can identify the correct direction for projecting data. 
	
\subsection{Union-of-subspaces combined with random dot product model}\label{eg:union}

	We generate an example for interaction of two nodes with features $\beta_1$ and $\beta_2$ through a random dot product graph model defined over al union-of-subspaces, as illustrated in Fig. \ref{tree}. There are three sets which correspond to the three leaf nodes in the tree. Each node in the tree is associated with a subset lies on a subspace and also  a logistic regression coefficient vector. At each time, two predictor vectors $x_{1, t}$ and $x_{2, t}$ of $100$ dimensions are observed (which may belong to the same subset or different subsets) and their interaction $y_t$ is generated through logistic regression model that depending on their inner product. In this example, we also assume there are missing data: only $40\%$ of the samples are observed and the variance of the noise is 0.01. The subsets in the tree are also time varying.  The subspace associated with the root node is a random orthogonal matrix that rotates: $U_{1,t} = \exp(Rt)$ with $R$ being a random per-symmetric matrix that determines the rotation. The children nodes of the root node are slight rotation of the subspace of their parent node: $U_{2,1,t} = \exp(R) U_{1,1,t}$, $U_{2,2,t} = \exp(-R) U_{1,1,t}$, $U_{3,1,t} = \exp(R/2) U_{2,1,t}$, $U_{3,2,t} = \exp(-R/2)U_{2,1,t}$. Results in Table \ref{table:1} shows that OSDR outperforms the conventional online logistic regression (which does not perform dimension reduction and ignores the tree structure). 

\begin{table}[h]
\begin{center}
\caption{Comparison of $P_e$ for data generated from a union-of-subspaces combined with random dot product model.}
\begin{tabular}{|c|c|}
\hline
  Online   & Hierarchical  \\
 logistic regression  & OSDR \\\hline
 0.2133 & 0.1440\\\hline
\end{tabular}
\label{table:1}
\end{center}
\vspace{-0.2in}
\end{table}

\subsection{Real-data experiments}

\noindent{\it USPS dataset.} We test OSDR logistic regression on the well-known USPS dataset of handwritten digits. The dataset contains a training set with 7291 samples, and a test set with 2007 samples. The digits are downscaled to $16 \times 16$ pixels. To demonstrate a online setting, we read one picture at a time with the task of classifying whether a digit is ``2'' or not. Again, we first use vectored image as $x_t$, predict label $\hat{y}_t$, and then reveal the true label followed by using ($x_t, y_t$) to update the logistic regression model.
Fig. \ref{Fig:usps} demonstrates OSDR compared favorably with ODR for this task. The improvement is not significant; which may be due to the fact that in this case the dominant eigenvectors captures most useful features for classification already. 

\vspace{0.1in}
\noindent{\it ``Boy or girl'' classification.} We next perform a ``boy or girl'' classification task on an image set we collected from  college students enrolled in a class. This dataset contains 179 grey scaled images, where 116 of them are from boys. Each picture has the resolution of 65x65, thus the original space has a dimension $D=4225$. We train both algorithms from raw images. For the parameter search, we tune the learning rate $\mu \in [10^{-2}, 10^{-3}, 10^{-4}]$ and $\eta \in [10^{-2}, 10^{-3}, 10^{-4}]$ for both algorithms. Experiments are carried out with different settings of $d \in [10, 50, 100, 150]$. The mean test error of 5-fold cross validation on this dataset are reported in Table~\ref{table:boys_girls_exp}, for each configuration.

\begin{figure}[t]
  \centering
  \includegraphics[width=0.22\textwidth]{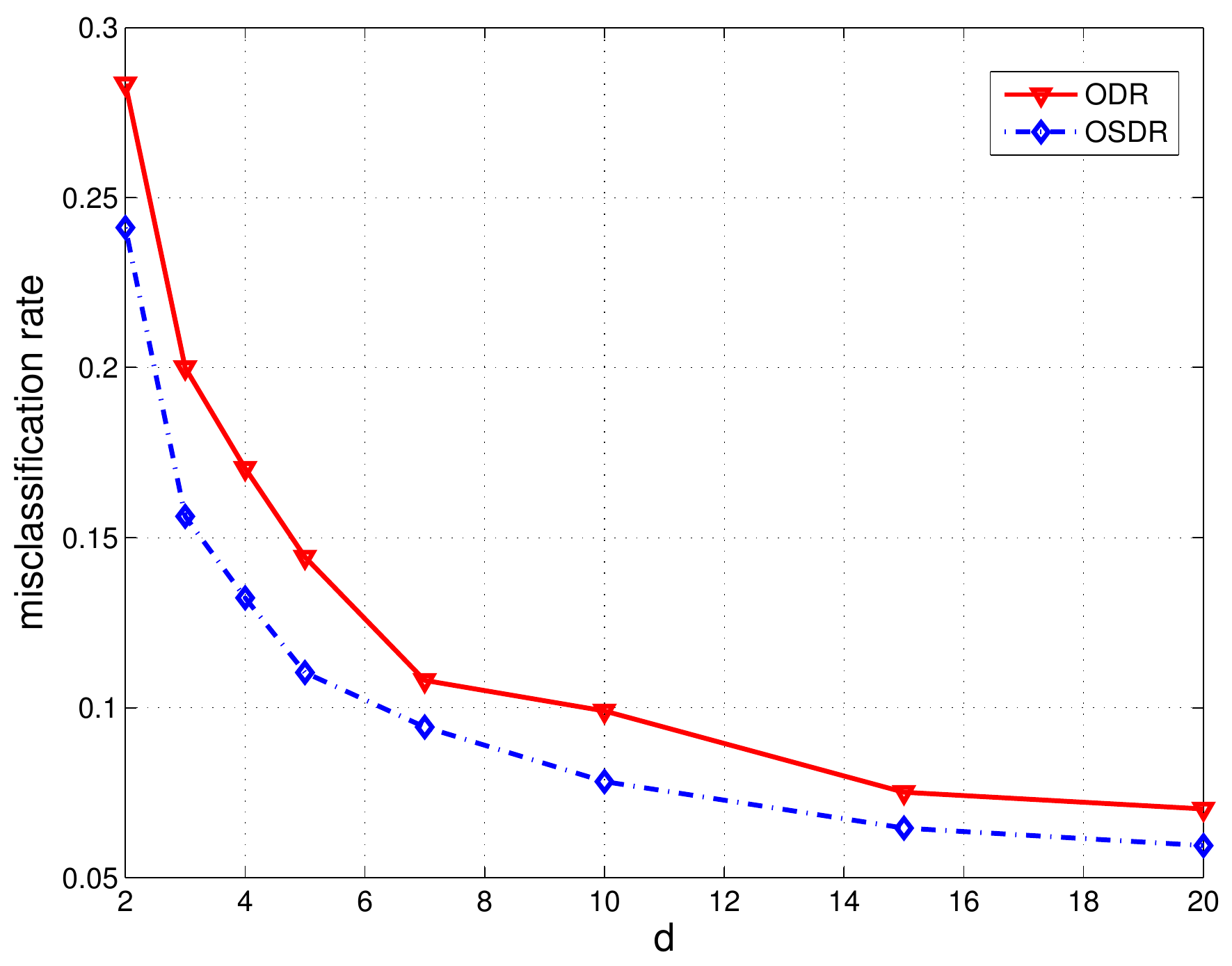}
  \caption{USPS digits recognition. Dimension of the subspace $d$ versus misclassification rate $P_e$ for OSDR and ODR, respectively.}
  \vspace{-0.15in}
  \label{Fig:usps}
\end{figure}

\begin{table}[h]
\begin{center}
\caption{Average test error after 5-fold cross validation on Boy-vs-Girl dataset. }
\begin{tabular}{|c|c|c|c|c|}
\hline
	Method & $d=10$ & $d=50$ & $d=100$ & $d=150$ \\
\hline
	ODR & 0.4800 & 0.4629 & 0.4517 & 0.4229 \\
\hline
	OSDR & 0.3314 & 0.2343 & 0.2171 & 0.1714 \\
\hline
\end{tabular}
\label{table:boys_girls_exp}
\end{center}
\end{table}

    For this dataset, the OSDR algorithm significantly outperforms ODR. 
To gain a better understanding for its good performance, we examine the subspace generated from the experiment. We first visualize the top 7 vectors in the basis of subspace $U$ for OSDR and ODR, respectively. We reshape each vector into a image and, hence, this displays the so-called ``eigenface''. Note that the eigenfaces generated by the unsupervised (corresponding to online PCA) and the supervised subspace learning are very different. 
The online PCA keeps some facial details, while the OSDR algorithm is getting some vectors that are hard to explain visually, but may actually captured details that are more important for telling apart boys and girls.
    
    We further examine the average image of reconstructed faces $x = U\beta$ in Fig.~\ref{Fig:recon_mean}. 
    We compute the average reconstructed image of boys and girls separately, so as to evaluate the discriminative ability of both algorithms. We can see the unsupervised subspace tracking (online PCA) obtained many details of the facial attributes, and the reconstructions of boys and girls have little differences, which makes it hard to distinguish the two genders. So in this case, the unsupervised algorithm fails because of the lack of supervised information. In contrast, the supervised OSDR extracts two very different ``average'' faces for the boy and the girl, respectively, although these average faces do not directly reflect any facial detail. Interestingly, from the contour we learned that, the most discriminative attribute learned by OSDR is the hair (see the dark part in Fig.~\ref{Fig:recon_mean} (d) around shoulder of the girls). This is a straightforward and efficient feature for distinguishing boys and girls. This example clearly demonstrate that OSDR focus on extracting the component that differentiate two classes, and hence, is quite suitable for dimensionality reduction in classification problems.    
    
    

\begin{figure}
\begin{center}
\includegraphics[width = 1\linewidth]{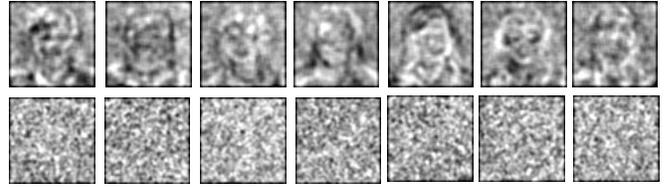}
\end{center}
	\caption{Visualization of top 7 basis in subspace $U$: ``eigenfaces''. Images are processed via blurring and contrast enhancement for better visualization. The first row corresponds to the basis obtained from ODR (baseline) algorithm, while the second row consists of basis from the OSDR algorithm.}
	\label{Fig:basis}
\end{figure}

\begin{figure}
	\centering
	\subfigure[baseline boys]{
		\includegraphics[scale=0.9]{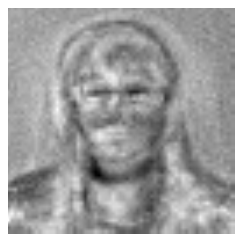}
	}\hspace*{-0.5em}
	\subfigure[baseline girls]{
		\includegraphics[scale=0.9]{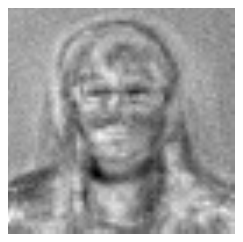}
	}\hspace*{-0.5em}
	\subfigure[OSDR boys]{
		\includegraphics[scale=0.9]{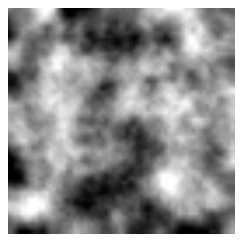}
	}\hspace*{-0.5em}
	\subfigure[OSDR girls]{
		\includegraphics[scale=0.9]{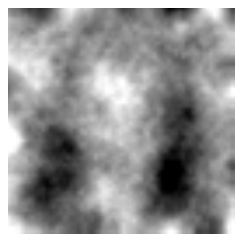}
	}
	\caption{Mean reconstructed image for boys and girls. (a) and (b) are showing results obtained from ODR algorithm, while (c) and (d) are showing results created by OSDR. Images are processed via blurring and contrast enhancement for better visualization.
    }
	\label{Fig:recon_mean}
\end{figure}

%

\bibliographystyle{plain}
\bibliography{bib}

\begin{thebibliography}{10}

\bibitem{blindSource2001}
M.~Zibulevsky and B.~A. Pearlmutter, ``Blind source separation by sparse
  decomposition in a signal dictionary,'' {\em Neural computation}, vol.~13,
  no.~4, pp.~863--882, 2001.

\bibitem{DictionarySapiro09}
J.~Marial, F.~Bach, J.~Ponce, and G.~Sapiro, ``Online dictionary learning for
  sparse coding,'' in {\em Proc. 26th Int. Conf. Machine Learning (ICML)},
  2009.

\bibitem{wrightDictionary2012}
D.~A. Spielman, H.~Wang, and J.~Wright, ``Exact recovery of sparsely-used
  dictionaries,'' in {\em Proc. 25th Annal Conf. Learning Theory}, 2012.

\bibitem{GROUSE1}
L.~Balzano, R.~Nowak, and B.~Recht, ``Online identification and tracking of
  subspaces from highly incomplete information,'' in {\em Communication,
  Control, and Computing (Allerton), 2010 48th Annual Allerton Conference On},
  2010.

\bibitem{robustPCA13}
J.~Feng, H.~Xu, and S.~Yan, ``Online robust {PCA} via stochastic
  optimization,'' in {\em Neural Information Processing Systems Foundation
  (NIPS)}, 2013.

\bibitem{WangTu13}
B.~Wang and Z.~Tu, ``Sparse subspace denoising for image manifolds,'' in {\em
  2013 IEEE Conf. Computer and Vision Pattern Recognition (CVPR)}, 2013.

\bibitem{subspaceDim2014}
D.~Arpit, I.~Nwogu, and V.~Govindaraju, ``Dimensionality reduction with
  subspace structure preservation,'' in {\em Neural Information Processing
  Systems Foundation (NIPS)}, 2014.

\bibitem{BiswasBasu2011}
K.~K. Biswas and S.~K. Basu, ``Gesture recognition using microsoft kinect,'' in
  {\em Automation, Robotics and Applications (ICARA), 2011 5th Int. Conf.},
  2011.

\bibitem{KinectHand2012}
Y.~Li, ``Hand gesture recognition using kinect,'' in {\em Software Engineering
  and Service Science (ICSESS), 2012 IEEE 3rd Int. Conf. on}, 2012.

\bibitem{GROUSE2}
L.~Balzano and S.~J. Wright, ``On grouse and incremental svd,'' in {\em IEEE
  5th International Workshop on Computational Advances in Multi-Sensor Adaptive
  Processing (CAMSAP)}, 2013.

\bibitem{GROUSEproof}
L.~Balzano and S.~J. Wright, ``Local convergence of an algorithm for subspace
  identification from partial data.,'' {\em Foundations of Computational
  Mathematics}, pp.~1--36, Oct. 2014.

\bibitem{PETRELS}
Y.~Chi, Y.~C. Eldar, and R.~Calderbank., ``Petrels: Parallel subspace
  estimation and tracking using recursive least squares from partial
  observations,'' {\em IEEE Trans. on Signal Processing}, vol.~61, pp.~5947 --
  5959, 2013.

\bibitem{MOUSSE2013}
Y.~Xie, J.~Huang, and R.~Willett, ``Change-point detection for high-dimensional
  time series with missing data,'' {\em IEEE Journal of Selected Topics in
  Signal Processing (J-STSP)}, vol.~7, pp.~12--27, Feb. 2013.

\bibitem{logistic2012}
Y.~Xie and R.~Willett, ``Online logistic regression on manifolds,'' in {\em
  IEEE Int. Conf. Acoustics, Speeches and Sig. Proc. (ICASSP)}, 2012.

\bibitem{SIR1991}
K.-C. Li, ``Sliced inverse regression for dimension reduction,'' {\em J.
  American Stat. Association}, vol.~86, pp.~316--327, 1991.

\bibitem{CookForzani2009}
R.~D. Cook and L.~Forzani, ``Likelihood-based sufficient dimension reduction,''
  {\em J. American Stat. Association}, pp.~197--208, 2009.

\bibitem{NilssonShaJordan2007}
F.~S. J.~Nilsson and M.~I. Jordan, ``Regression on manifolds using kernel
  dimension reduction,'' {\em Proc. 24th Int. Conf. Machine Learning (ICML)nd
  Int. Conf. Machine Learning (ICML)}, 2007.

\bibitem{edelman1998geometry}
A.~Edelman, T.~A. Arias, and S.~T. Smith, ``The geometry of algorithms with
  orthogonality constraints,'' {\em SIAM journal on Matrix Analysis and
  Applications}, vol.~20, no.~2, pp.~303--353, 1998.

\bibitem{NickelThesis2006}
C.~L.~M. Nickel, {\em Random dot product graph: a model for social networks}.
\newblock PhD thesis, University of Maryland, 2006.

\bibitem{YoungScheinerman2007}
S.~J. Young and E.~R. Scheinerman, ``Random dot product graph models for social
  networks,'' {\em Algorithms and Models for the Web-Graph: Lecture Notes in
  Computer Science Volume}, vol.~4863, pp.~138--149, 2007.

\bibitem{randomDot2}
E.~R. Scheinerman and K.~Tucker, ``Modeling graphs using dot product
  representations,'' {\em Computational Statistics}, vol.~25, pp.~1--16, 2010.

\bibitem{high_rank_MC12}
B.~Eriksson, L.~Balzano, and R.~Nowak, ``High rank matrix completion,'' in {\em
  Proc. of Intl. Conf. on Artificial Intell. and Stat}, 2012.

\bibitem{SSP12}
Y.~Xie, J.~Huang, and R.~Willett, ``Multiscale online tracking of manifolds,''
  in {\em IEEE Statistical Signal Processing Workshop (SSP)}, 2012.

\bibitem{AllardChenMaggioni2011}
W.~Allard, G.~Chen, and M.~Maggioni, ``Multi-scale geometric methods for data
  sets {II}: Geometric multi-resolution analysis,'' {\em App. and Comp.
  Harmonic Ana.}, vol.~32, pp.~435 -- 462, May 2011.

\bibitem{yang1995projection}
B.~Yang, ``Projection approximation subspace tracking,'' {\em Signal
  Processing, IEEE Transactions on}, vol.~43, no.~1, pp.~95--107, 1995.

\bibitem{niesen2009adaptive}
U.~Niesen, D.~Shah, and G.~W. Wornell, ``Adaptive alternating minimization
  algorithms,'' {\em Information Theory, IEEE Transactions on}, vol.~55, no.~3,
  pp.~1423--1429, 2009.

\bibitem{epsilon_net_2014}
N.~Alon, T.~Lee, A.~Shraibman, and S.~Vempala, ``The approximate rank of a
  matrix and its algorithmic applications,'' {\em Proceedings of the
  forty-fifth annual ACM symposium on Theory of computing}, 2013.

\bibitem{PlanThesis2011}
Y.~Plan, {\em Compressed sensing, sparse approximation, and low-rank matrix
  estimation}.
\newblock PhD thesis, California Institute of Technology, 2011.

\end{thebibliography}

\end{document}